\newcommand{\bx}{\boldsymbol{x}}
\newcommand{\bu}{\boldsymbol{u}}
\newcommand{\by}{\boldsymbol{y}}
\newcommand{\sY}{\mathcal{Y}}
\newcommand{\bz}{\boldsymbol{z}}
\newcommand{\bw}{\boldsymbol{w}}
\newcommand{\bv}{\boldsymbol{v}}
\newcommand{\bzero}{\boldsymbol{0}}
\newcommand{\balpha}{\boldsymbol{\alpha}}
\newcommand{\bbeta}{\boldsymbol{\beta}}
\newcommand{\sX}{\mathcal{X}}
\newcommand{\sK}{\mathcal{K}}
\DeclareMathOperator*{\argmin}{argmin}
\newcommand{\field}[1]{\mathbb{#1}}
\newcommand{\E}{\field{E}}
\renewcommand{\Pr}{\field{P}}
\newcommand{\Ind}[1]{ \field{I}\left\{{#1}\right\} }
\newcommand{\norml}[1]{\left\|{#1}\right\|}
\newcommand{\norm}[1]{\|{#1}\|}
\renewcommand{\ss}{\subseteq}
\newtheorem{lemma}{Lemma}
\newtheorem{theorem}{Theorem}
\newtheorem{cor}{Corollary}
\newtheorem{definition}{Definition}
\newcommand{\secref}[1]{Section~\ref{#1}}
\newcommand{\subsecref}[1]{Subsection~\ref{#1}}
\newcommand{\thmref}[1]{Theorem~\ref{#1}}
\newcommand{\corref}[1]{Corollary~\ref{#1}}
\newcommand{\lemref}[1]{Lemma~\ref{#1}}
\newcommand{\sW}{\mathcal{W}}
\newcommand{\sH}{\mathcal{H}}
\newcommand{\sD}{\mathcal{D}}
\newcommand{\reals}{\mathbb{R}}
\newcommand{\inner}[1]{\left\langle #1 \right\rangle}
\newcommand{\be}{\boldsymbol{e}}
\newcommand{\bsigma}{\boldsymbol{\sigma}}
\newcommand{\Ocal}{\mathcal{O}}
\begin{document}

\title{On the Complexity of Learning with Kernels}
\author{
Nicol\`o Cesa-Bianchi\thanks{
Universit\`a degli Studi di Milano, Italy 
(\texttt{nicolo.cesa-bianchi@unimi.it}) }
\and Yishay
Mansour\thanks{Microsoft Research and Tel-Aviv University, Israel
(\texttt{mansour@tau.ac.il}). Supported in part by a grant from the Israel
Science Foundation, a grant from the United States-Israel Binational Science
Foundation (BSF), a grant by Israel Ministry of Science and Technology and
the Israeli Centers of Research Excellence (I-CORE) program (Center No.\
4/11).}
\and Ohad Shamir \thanks{
Weizmann Institute of Science, Israel (\texttt{ohad.shamir@weizmann.ac.il}).
Supported in part by an Israeli Science Foundation grant (no. 425/13) and an
FP7 Marie Curie CIG grant.
}
}

\date{}

\maketitle

\begin{abstract}
A well-recognized limitation of kernel learning is the requirement to handle a kernel matrix, whose size is quadratic in the number of training examples. Many methods have been proposed to reduce this computational cost, mostly by using a subset of the kernel matrix entries, or some form of low-rank matrix approximation, or a random projection method. In this paper, we study lower bounds on the error attainable by such methods as a function of the number of entries observed in the kernel matrix or the rank of an approximate kernel matrix. We show that there are kernel learning problems where no such method will lead to non-trivial computational savings. Our results also quantify how the problem difficulty depends on parameters such as the nature of the loss function, the regularization parameter, the norm of the desired predictor, and the kernel matrix rank. Our results also suggest cases where more efficient kernel learning might be possible.
\end{abstract}

\section{Introduction}\label{sec:introduction}

We consider the well-known problem of kernel learning (see, e.g.,~\cite{scholkopf2001learning}), where given a training set of labeled examples
$\{(\bx_t,y_t)\}_{t=1}^{m}$ from a product domain $\sX\times \sY$, our goal
is to find a linear predictor $\bw$ in a reproducing kernel Hilbert space
which minimizes the average loss, possibly with some regularization.
Formally, our goal is to solve
\begin{equation}\label{eq:erm}
\min_{\bw\in\sW}\frac{1}{m}\sum_{t=1}^{m}\ell(\inner{\bw,\psi(\bx_t)},y_t)+\frac{\lambda}{2}\norm{\bw}^2\;,
\end{equation}
where $\sW$ is a convex subset of some reproducing kernel Hilbert space
$\sH$, $\psi:\sX\mapsto \sH$ is a feature mapping to the Hilbert space,
$\ell$ is a loss function convex in its first argument, and $\lambda\geq 0$
is a regularization parameter. For example, in the standard formulation of
Support Vector Machines, we take $\ell$ to be the hinge loss, pick some
$\lambda>0$, and let $\sW$ be the entire Hilbert space. Alternatively, one
can also employ hard regularization, e.g., setting $\lambda=0$ and taking
$\sW=\{\bw:\norm{\bw}\leq R\}$.

It is well-known that even if $\sH$ is high or infinite dimensional, we can
solve \eqref{eq:erm} in polynomial time, provided there is an efficiently
computable \emph{kernel function} $k$ such that
$k(\bx,\bx')=\inner{\psi(\bx),\psi(\bx')}$. The key insight is provided by
the representer theorem, which implies that an optimum of \eqref{eq:erm}
exists in the span of $\psi(\bx_1),\ldots,\psi(\bx_m)$. Therefore, instead of
optimizing over $\bw$, we can optimize over a coefficient vector $\balpha$,
which implicitly specifies a predictor via
$\bw(\balpha)=\sum_{j=1}^{m}\alpha_j\psi(\bx_j)$. In this case,
\eqref{eq:erm} reduces to
\[
\min_{\balpha\,:\,\bw(\balpha)\in\sW}\frac{1}{m}\sum_{t=1}^{m}
\ell\left(\sum_{j=1}^{m}\alpha_j\inner{\psi(\bx_j),\psi(\bx_t)},y_t\right)+\frac{\lambda}{2}\norm{\bw(\balpha)}^2~.
\]
Defining the $m\times m$ kernel matrix
$K_{i,j}=\inner{\psi(\bx_i),\psi(\bx_j)}=k(\bx_i,\bx_j)$, we can re-write the
above as
\begin{equation}\label{eq:kernelerm}
\min_{\balpha\,:\,\bw(\balpha)\in \sW}\frac{1}{m}\sum_{t=1}^{m}\ell\left(\balpha^\top K \be_t,y_t\right)+\frac{\lambda}{2}\balpha^\top K \balpha~.
\end{equation}
This is a convex problem, which can generally be solved in polynomial time.
The resulting $\balpha$ implicitly defines the linear predictor
$\bw(\balpha)$ in the Hilbert space: Given a new point $\bx$ to predict on,
this can be efficiently done according to
\[
\inner{\bw(\balpha),\psi(\bx)} = \inner{\sum_{j=1}^{m}\alpha_j\psi(\bx_j),\psi(\bx)} = \sum_{j=1}^{m}\alpha_j\inner{\psi(\bx_j),\psi(\bx)} = \sum_{j=1}^{m}\alpha_j k(\bx_j,\bx)~.
\]
Unfortunately, a major handicap of kernel learning is that it requires
computing and handling an $m\times m$ matrix, where $m$ is the size of the
training data, and this can be prohibitive in large-data applications. This
has led to a large literature on efficient kernel learning, which attempts to
reduce its computational complexity. As far as we know, the algorithms
proposed so far fall into one or more of the following categories
(see below for specific references):
\begin{itemize}
  \item \emph{Limiting the number of kernel evaluations:} A dominant
      computational bottleneck in kernel learning is computing all
      entries of the kernel matrix. Thus, several algorithms attempt to
      learn using a much smaller number of kernel evaluations -- either by
      sampling them or using other schemes which require ``reading'' only a
      small part of the kernel matrix.
  \item \emph{Low-Rank Kernel Approximation:} Instead of using the full
      $m\times m$ kernel matrix, one can use instead a low-rank
      approximation of it. Learning with a low-rank matrix can be done in a
      computationally much more efficient manner than with a general
      kernel matrix (e.g., \cite{scholkopf2001learning,Bach13}).
  \item \emph{Projection to a low-dimensional space:} Each instance $\bx$
      is mapped to a finite-dimensional vector
          $\phi(\bx)=\bigl(\phi_1(\bx),\ldots,\phi_d(\bx)\bigr)$ where
          $d\ll m$, so that $\inner{\phi(\bx),\phi(\bx')}\approx
          k(\bx,\bx')$. Note that this is equivalent to a kernel problem
          where the rank of the kernel matrix is $d$, so it can be seen as
          a different kind of low-rank kernel approximation technique.
\end{itemize}
Existing theoretical results focus on performance guarantees for various
algorithms. In this work, we consider a complementary question, which
surprisingly has not been thoroughly explored (to the best of our knowledge):
What are the inherent obstacles to efficient kernel learning? For example, is
it possible to reduce the number of kernel evaluations while maintaining the
same learning performance? Is there always a price to pay for low-rank matrix
approximation? Can finite-dimensional projection methods match the
performance of algorithms working on the original kernel matrix?

Specifically, we study information-theoretic lower bounds on the attainable
performance, measured in terms of optimization error on a given training set.
We consider two distinct types of constraints:
\begin{itemize}
  \item The number of kernel evaluations (or equivalently, the number of
      entries of the kernel matrix observed) is bounded by $B$, where $B$
      is generally assumed to be much smaller than $m^2$ (the number of
      entries in the kernel matrix).
  \item The algorithm solves \eqref{eq:kernelerm}, but using some low-rank
      matrix $\hat{K}$ instead of $K$. This can be seen as using a low-rank
      kernel matrix approximation.
\end{itemize}
We make no assumptions whatsoever on which kernel evaluations are used, or
the type of low-rank approximation, so our results apply to all the methods
mentioned previously, and any future potential method which uses these types
of approaches. We note that although we focus on optimization error on a
given training set, our lower bounds can also be potentially extended to
generalization error, where the data is assumed to be sampled i.i.d.\ from an
underlying distribution. We discuss this point further in
\secref{sec:discussion}.

Our first conclusion, informally stated, is that it is generally impossible to
make kernel learning more efficient in a non-trivial manner. For example,
suppose we have a budget $B$ on the number of kernel evaluations, where $B\ll
m^2$. Then the following ``trivial'' sub-sampling method turns out to be
optimal in general: Sub-sample $\sqrt{B}$ examples from the training data
uniformly at random (throwing away all other examples), compute the full
$\sqrt{B}\times\sqrt{B}$ kernel matrix based on the sub-sample, and train a
predictor using this matrix. This is an extremely n\"{a}ive algorithm,
throwing away almost all of the data, yet we show that there are cases where
no algorithm can be substantially better. Another pessimistic result can be
shown for the low-rank matrix approximation approach: There are cases where
any low-rank approximation will impact the attainable performance.

Our formal results go beyond these observations, and quantify the attainable
performance as a function of several important problem parameters, such as
the kernel matrix rank, regularization parameter, norm of the desired
predictor, and the nature of the loss function. In particular:
\begin{itemize}
  \item Given a kernel evaluation budget constraint $B$:
  \begin{itemize}
  \item For the absolute loss, no regularization ($\lambda=0$), and a
      constant norm constraint on the domain, we have an error lower
      bound of $\Omega(B^{-1/4})$. A matching upper bound is obtained by
      the sub-sampling algorithm discussed earlier.
  \item For soft regularization (with regularization parameter $\lambda > 0$ and no norm constraint), we attain error lower bounds which depend on the structure of the loss function. Some particular corollaries include:
    \begin{itemize}
        \item For the absolute loss, $\Omega(1/\lambda\sqrt{B})$. Again, a matching upper bound is attained by a sub-sampling algorithm.
        \item For the hinge loss, $\Omega(1)$ as long as $B< 1/\lambda^2$. Although it only applies in a certain budget regime, it is tight in terms of identifying the kernel evaluation budget required to make the error sub-constant. Moreover, it sheds some light on previous work (e.g.,~\cite{CoShSr12}) which considered efficient kernel learning methods for the hinge loss.
        \item For the squared loss, $\Omega\left(\min\left\{1,\lambda\sqrt{B}\right\}\right)^{-3}$, as long as $B\ll m^2$. Like the result for the other losses, it implies that no sub-constant error is possible unless $B\geq 1/\lambda^2$.
    \end{itemize}
  \end{itemize}
  \item For learning with low-rank approximation, with rank parameter $d$, in the case of Ridge Regression (squared loss and soft regularization), we attain an error lower bound of $\Omega((\lambda d)^{-3})$. Thus, to get sub-constant error, we need the rank to scale at least like $1/\lambda$.
\end{itemize}

The role of the loss function is particularly interesting, since it has not been well-recognized in previous literature, yet our results indicate that it may play a key role in
the complexity of kernel learning. For example, as we discuss in
\secref{sec:budget}, efficient kernel learning is trivial with the linear
loss, harder for smooth and non-linear losses, and appears to be especially
hard for non-smooth losses. Our results also highlight the importance of the
kernel matrix rank in determining the difficulty of kernel learning. While it
has been recognized that low rank can make kernel learning easy (see
references below), our results formally establish the reverse direction,
namely that (some) high-rank matrices are indeed hard to learn with any
algorithm.

\subsection*{Related Work}

The literature on efficient kernel methods is vast and we cannot do it full justice. A
few representative examples include sparse greedy kernel approximations
\cite{scholkopf2001learning}, Nystr\"{o}m-based methods, which sample a few
rows and columns and use it to construct a low-rank approximation
\cite{DrMa05,kumar2009sampling}, random finite-dimensional kernel
approximations such as random kitchen sinks
\cite{RaRe07,RaRe08,dai2014scalable}, the kernelized stochastic batch
Perceptron for learning with few kernel evaluations \cite{CoShSr12},
the random budget Perceptron and the Forgetron \cite{cavallanti2007tracking,dekel2008forgetron}, divide-and-conquer approaches
\cite{ZDW13,hsieh2014divide}, sequential algorithms with early stopping
\cite{yao2007early,RaWaYu14}, other numerical-algebraic methods for low-rank
approximation, e.g.,
\cite{fine2002efficient,shawe2004kernel,BaJo05,mahoney2009cur,kumar2009sampling},
combinations of the above \cite{dai2014scalable}, and more. Several works
provide a theoretical analysis on the performance of such methods, as a
function of the rank, number of kernel evaluations, dimensionality of the
finite-dimensional space, and so on. Beyond the works mentioned above, a few
other examples include
\cite{cortes2010impact,yang2012nystrom,Bach13,lin2014sample}.

In terms of lower bounds, we note that there are existing results on the
error of matrix approximation, based on partial access to the matrix (see
\cite{bar2003sampling,frieze2004fast}). However, the way the error is
measured is not suitable to our setting, since they focus on the Frobenius
norm of $K-\widehat{K}$, where $K$ is the original matrix and $\widehat{K}$ is the
approximation. In contrast, in our setting, we are interested in the error of
a resulting predictor rather than the quality of matrix approximation.
Therefore, even if $\norm{K-\widehat{K}}$ is large, it could be that
$\widehat{K}$ can still be used to learn an excellent predictor. Another distinct
line of work studies how to reduce the complexity of a kernel predictor at
test time, e.g., by making it supported on a few support vectors (see for
instance \cite{cotter2013learning} and references therein). This differs from
our work, which focuses on efficiency at training time.

\subsection*{Paper Organization}

Our paper is organized as follows. In \secref{sec:hard}, we introduce the
class of kernel matrices which shall be used to prove our results, and
discuss how they can be generated by standard kernels. In
\secref{sec:budget}, we provide lower bounds in a model where the algorithm
is constrained in terms of the number of kernel evaluations used. We consider
this model in two flavors, one where there is a norm constraint and no
regularization (\subsecref{subsec:norm}), and one where there is regularization
without norm constraint (\subsecref{subsec:reg}). In the former case, we focus
on a particular loss, while in the latter case, we provide a more general
result and discuss how different types of losses lead to different types of
lower bounds. In \secref{sec:lowrank}, we consider the model where the
algorithm is constrained to use a low-rank kernel matrix approximation.
We conclude and discuss open questions in \secref{sec:discussion}. Proofs
appear in Appendix \ref{sec:proofs}.

\section{Hard Kernel Matrices}\label{sec:hard}

For our results, we utilize a set $\sK_{d,m}$ of ``hard'' kernel matrices,
which are essentially permutations of block-diagonal $m\times m$ matrices
with at most $d$ blocks. More formally:
\begin{definition}
  Let $\sK'_{d,m}$ be the class of all block-diagonal $m\times m$ matrices,
  composed of at most $d$ blocks, with entry values of $1$ within each block.
  We define $\sK_{d,m}$ to be all matrices which belong to $\sK'_{d,m}$
  under some permutation $\pi:\{1\ldots m\}\mapsto \{1\ldots m\}$ of their rows and columns:
\[
\sK_{d,m}=\left\{K\in \{0,1\}^{m\times m}:~ \exists~\pi,K'\in \sK'_{d,m}\text{s.t.}~\forall i,j\in \{1\ldots m\},~K_{i,j}=K'_{\pi(i),\pi(j)}\right\}~.
\]
\end{definition}
From the definition, it is immediate that any $K\in\sK_{d,m}$ is positive
semidefinite (and hence is a valid kernel matrix), with rank at most $d$.
Moreover, the magnitude of the diagonal elements is at most $1$, which means
that our data lies in the unit ball in the Hilbert space.

Since our focus is on generic kernel learning, it is sufficient to consider
this class in order to establish hardness results. However, it is still
worthwhile to consider what kernels can induce this class of kernel matrices.
A sufficient condition can be quantified via the following lemma.
\begin{lemma}\label{lem:suffkern}
Suppose there exist $\bz_1,\ldots,\bz_d\in \sX$ such that
$k(\bz_i,\bz_j)=\Ind{\bz_i=\bz_j}$. Then any $K\in \sK_{d,m}$ is induced by
some $m$ instances $\{\bx_t\}_{t=1}^{m}\in \sX$.
\end{lemma}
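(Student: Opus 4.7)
The plan is to construct the $m$ instances directly from the block structure of the given kernel matrix, assigning each training index to one of the $d$ distinguished points $\bz_1,\dots,\bz_d$ according to which diagonal block it belongs to.

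First, I would unpack the definition of $\sK_{d,m}$: for a given $K \in \sK_{d,m}$, there exists a permutation $\pi:\{1,\dots,m\}\to\{1,\dots,m\}$ and a block-diagonal matrix $K' \in \sK'_{d,m}$ with at most $d$ blocks of all-ones entries, such that $K_{i,j} = K'_{\pi(i),\pi(j)}$. Label the (nonempty) blocks of $K'$ by index sets $B_1,\dots,B_r \subseteq \{1,\dots,m\}$, where $r \le d$, so that $K'_{a,b}=1$ exactly when $a$ and $b$ belong to the same block $B_\ell$, and $K'_{a,b}=0$ otherwise.

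Next, for each $t \in \{1,\dots,m\}$ let $\ell(t)$ denote the unique block index such that $\pi(t) \in B_{\ell(t)}$, and define the instance $\bx_t \defeq \bz_{\ell(t)}$ (using the first $r \le d$ of the given $\bz_i$'s). Then by the assumption on the kernel,
\[
k(\bx_s,\bx_t) \;=\; k(\bz_{\ell(s)},\bz_{\ell(t)}) \;=\; \Ind{\bz_{\ell(s)}=\bz_{\ell(t)}} \;=\; \Ind{\ell(s)=\ell(t)}.
\]
The right-hand side is exactly $1$ when $\pi(s)$ and $\pi(t)$ lie in the same block of $K'$, and $0$ otherwise, i.e., it equals $K'_{\pi(s),\pi(t)} = K_{s,t}$. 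Hence the kernel matrix of $\bx_1,\dots,\bx_m$ coincides with $K$, completing the argument.

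There is no real obstacle here; the proof is essentially bookkeeping once one digests the definition of $\sK_{d,m}$. The only subtlety to flag is that the hypothesis $k(\bz_i,\bz_j)=\Ind{\bz_i=\bz_j}$ forces the off-block entries of the constructed kernel matrix to be exactly zero, which is why block-diagonal $0/1$ matrices (rather than arbitrary low-rank PSD matrices) are precisely what this construction realizes.
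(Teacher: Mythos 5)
Your proof is correct and is essentially the paper's argument, just written out formally: the paper also constructs the instances by placing $n_i$ copies of $\bz_i$ in block $i$ and applying the permutation. You have merely made the bookkeeping explicit.
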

The proof is immediate: Given any $K$, for any block $i$ of size $n_i$,
create $n_i$ copies of $\bz_i$, and order the instances according to the
relevant permutation.

It is straightforward to see that \lemref{lem:suffkern} holds for linear
kernels $k(\bx,\bx')=\inner{\bx,\bx'}$ and for homogeneous polynomial kernels
$k(\bx,\bx')=\inner{\bx,\bx'}^p$. It also holds (approximately) for Gaussian
kernels $k(\bx,\bx')=\exp(-\norm{\bx-\bx'}^2/\gamma)$ if there exist $d$
equi-distant points in $\sX$, where the squared distance is much larger than
$\gamma$. In that case, instead of $0$ outside the blocks, we will have
$\epsilon$ where $\epsilon$ is exponentially small, and can be shown to be
negligible for our purposes.

However, a close inspection of our results reveals that they are in fact
applicable to a much larger class of matrices: All we truly require is to
have $\bz_1,\ldots,\bz_d\in \sX$ such that $k(\bz_i,\bz_i)=a$ and
$k(\bz_i,\bz_j)=c$ for some distinct constants $a,c$ for all $i\neq j$. This
condition holds for most kernels we are aware of. For example, if there are
$d$ equi-distant points $\bz_1,\ldots,\bz_d\in \sX$, then this condition is
fulfilled for any shift-invariant kernel (where $k(\bx,\bx')$ is some
function of $\norm{\bx-\bx'}$). Similarly, if there are $d$ points
$\bz_1,\ldots,\bz_d$ which have the same inner product, then the condition is
fulfilled for any inner product kernel (where $k(\bx,\bx')$ is some function
of $\inner{\bx,\bx'}$). In order to keep a more coherent presentation we will
concentrate here on the Boolean case defined previously, where $a=1$ and
$c=0$.

Although our formal results and proofs contain many technical details, their basic intuition is quite simple:
When $d$ is sufficiently large, any matrix in $\sK_{d,m}$ is of high rank, and cannot be approximated well by any low-rank matrix. Therefore, under suitable conditions, no low-rank matrix approximation approach
can work well. Moreover, when $d$ is large, then the kernel matrix is quite sparse, and contains a large number
of relatively small blocks. Thus, for an appropriate randomized choice of a matrix in $\sK_{d,m}$, any
algorithm with a limited budget of kernel evaluations will find it difficult to detect these blocks. With
a suitable construction, we can reduce the kernel optimization problem to that of detecting these blocks, from
which our results follow.

\section{Budget Constraints}\label{sec:budget}

We now turn to present our results for the case of budget constraints. In this setting, the
learning algorithm is given the target values $y_1,\ldots,y_m$, but not the
kernel matrix $K$. Instead, the algorithm may \emph{query} at most $B$
entries in the kernel matrix (where $B$ is a user-defined positive integer),
and then return a coefficient vector based on this information. This model
represents approaches which attempt to reduce the computational complexity of
kernel learning by reducing the number of kernel evaluations needed. Standard
learning algorithms essentially require $B=\Omega(m^2)$, and the goal is to
learn to similar accuracy with a budget $B\ll m^2$. In this section, we
discuss the inherent limitations of this approach.

\subsection{Norm Constraint, Absolute Loss}\label{subsec:norm}

We begin by demonstrating a lower bound using the absolute loss
$\ell_1(u,y)=|u-y|$ on the domain $\sW=\bigl\{\bw \,:\, \norm{\bw}^2\leq 2\bigr\}$ (or
equivalently, coefficient vectors $\balpha$ satisfying $\balpha^\top
K\balpha\leq 2$), and our goal is to minimize the average loss, which
equals
\[
\min_{\balpha\,:\,\balpha^\top K \balpha\leq 2} \frac{1}{m}\sum_{t=1}^{m}
\left|\balpha^\top K \be_t-y_t\right|~.
\]
\begin{theorem}\label{thm:absmain}
For any rank parameter $d$, any sample size $m\geq 2^{7}d$, any budget size
$B<\frac{3}{50}d^2$, and for any budgeted algorithm, there exists a kernel
matrix $K\in \sK_{2d,m}$ and target values $y_1,\ldots,y_m\in [-1,+1]$, such
that the returned coefficient vector $\balpha$ satisfies
\begin{equation}\label{eq:absmaineq}
\left(\frac{1}{m}\sum_{t=1}^{m}
\left|\balpha^\top K \be_t-y_t\right|\right)
-\min_{\balpha\,:\,\balpha^\top K \balpha\leq 2}\frac{1}{m}\sum_{t=1}^{m}
\left|\balpha^\top K \be_t-y_t\right|
\geq \frac{1}{70\sqrt{d}}~.
\end{equation}
\end{theorem}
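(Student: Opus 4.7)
The plan is to apply Yao's minimax principle: it suffices to exhibit a distribution $\mu$ on pairs $(K,\by)$ with $K\in\sK_{2d,m}$ and $\by\in[-1,+1]^m$ such that every deterministic $B$-budget algorithm incurs expected excess loss at least $\tfrac{1}{70\sqrt{d}}$ under $\mu$, from which the existence of a deterministic hard instance follows. For $\mu$, I would let $\pi$ be a uniformly random partition of $[m]$ into $2d$ equal-size blocks $B_1,\ldots,B_{2d}$ of size $n=m/(2d)$, which induces $K(\pi)\in\sK_{2d,m}$, together with a label vector $\by=\by(\pi)$ whose block-aware optimum attains very small loss but which encodes a ``hidden'' subset $A\subset[2d]$ that cannot be inferred from $\by$ alone. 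The hidden structure is what the algorithm's kernel queries would need to discover in order to match the optimum; moreover, the labeling must be chosen carefully so that the naive label-matching predictor $\balpha_j=cy_j$ is provably far from the optimum (the latter is nontrivial precisely because for the absolute loss this predictor tends to reproduce the unconstrained Lagrangian solution).

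The information-theoretic core of the argument is that $B<\tfrac{3}{50}d^2$ queries reveal little about $\pi$. A uniformly random query $K_{ij}$ is intra-block with probability $(n-1)/(m-1)=\Theta(1/d)$, so the expected number of $1$-valued responses across $B$ queries is $O(d/50)$; a Chernoff bound pushes this to hold with high probability over $\mu$. A coupon-collector style bound then shows that only $O(d)$ of the $2d$ blocks are ``touched'' by informative responses. Consequently, the conditional distribution on $\pi$---and hence on $A$---given the query transcript remains essentially uniform over a combinatorially large family of consistent partitions.

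The main obstacle is converting this residual posterior uncertainty into a pointwise excess-loss gap of $\tfrac{1}{70\sqrt{d}}$. The approach is an Assouad-style two-point lower bound: for each algorithmic transcript one exhibits pairs of partitions $(\pi_1,\pi_2)$ consistent with that transcript for which the optimal coefficient vectors differ substantially. Since the algorithm outputs a single $\balpha\in\reals^m$ subject to $\balpha^{\top}K\balpha\le 2$, a Cauchy--Schwarz calculation on the norm constraint, together with the observation that $\by-K\balpha$ must be large on the blocks where $\pi_1$ and $\pi_2$ disagree, forces an $\Omega(1/\sqrt{d})$ excess loss for at least one of $\pi_1$ or $\pi_2$. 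The remaining technical work is to track constants so that the bound is exactly $\tfrac{1}{70\sqrt{d}}$, leveraging the assumption $m\ge 2^7 d$ to guarantee adequate block-size separation between the swap partitions and the budget bound $B<\tfrac{3}{50}d^2$ to ensure the queries fail to break the swap symmetry with nontrivial probability.
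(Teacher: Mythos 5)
Your high-level outline correctly identifies Yao's minimax principle as the right framework, correctly guesses that the random choice should hide a small amount of information (a subset \(A\subset[2d]\) versus the paper's \(\bsigma\in\{0,1\}^d\)), and correctly anticipates that a budget of \(B\ll d^2\) can only touch \(O(d)\) blocks and that the rest is an Assouad-type argument over the unqueried blocks. However, there are two concrete gaps that separate your plan from a working proof.

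\paragraph{The labels must not depend on the hidden partition.} You propose to draw a uniformly random partition \(\pi\) and then set \(\by=\by(\pi)\), acknowledging that this is delicate because the naive label-matching predictor tends to be near-optimal. The deeper problem is that the algorithm \emph{observes} \(\by\) before making any queries, so any \(\pi\)-dependence in \(\by\) leaks information about \(\pi\) ``for free,'' and with a block-constant label vector this leakage essentially reveals the partition and collapses the information-theoretic barrier. The paper sidesteps both issues at once by using a constant label \(y_t=1/\sqrt{d}\) for all \(t\), and randomizing the \emph{kernel matrix} instead: each of \(d\) macro-blocks is either a single all-ones block or two equal all-ones sub-blocks, decided by a hidden bit \(\sigma_i\). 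Because the algorithm returns the coefficient vector \(\balpha\) and not the prediction vector \(K\balpha\), it must commit to block-sums \((\beta_{i,1},\beta_{i,2})\) that are simultaneously acceptable under \(\sigma_i=0\) (which demands both sub-block sums near \(1/\sqrt{d}\)) and \(\sigma_i=1\) (which demands their \emph{total} near \(1/\sqrt{d}\)); these are incompatible and force loss \(\Omega(1/\sqrt{d})\) per unqueried block. No Cauchy--Schwarz on the norm constraint is needed for the lower bound; the norm constraint is only used to exhibit a feasible \(\balpha\) achieving zero loss so the excess loss equals the loss.

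\paragraph{The query-counting argument must handle adaptivity.} You bound the number of intra-block hits by observing that a \emph{uniformly random} query is intra-block with probability \(\Theta(1/d)\) and invoking Chernoff. But the adversarial algorithm is adaptive: once it has found one intra-block pair it can direct all subsequent queries at that pair's neighbors. The paper's Lemma~\ref{lem:block} handles this by a careful deterministic bookkeeping of how many distinct ``tested'' blocks a query can accumulate before it can newly discover a block (the quantity \(P_t\)), splitting queries into those that discover a block cheaply (analyzed probabilistically, using that two points not yet tied to a discovered block are placed independently) versus those that discover a block after many prior negative tests (analyzed with a counting argument showing this forces \(\Omega(d^2)\) total queries). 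A Chernoff bound over a uniform query distribution does not substitute for this and would only cover non-adaptive algorithms.

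In short, the plan captures the right \emph{shape} of the argument, but the specific mechanism proposed (random partition plus \(\pi\)-dependent labels plus a random-query Chernoff bound) does not close either the label-leakage issue or the adaptivity issue, both of which the paper's merge-versus-split construction with fixed labels and its deterministic-plus-conditional counting in Lemma~\ref{lem:block} are engineered to resolve.
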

The proof and the required construction appears in \subsecref{subsec:budget}.
Note that the algorithm is allowed to return any coefficient vector (not
necessarily one satisfying the domain constraint $\balpha^\top K \balpha \leq
2$).

The theorem provides a lower bound on the attainable error, for any rank
parameter $d$ and assuming the sample size $m$ and budget $B$ are in an
appropriate regime. A different way to phrase this is that if $B$ is
sufficiently smaller than $m^2$, then we can find some $d$ on the order of
$\sqrt{B}$, such that \thmref{thm:absmain} holds. More formally:
\begin{cor}\label{cor:sqrt4B}
  There exist universal constants $c,c'>0$ such that if $B\leq cm^2$, there is
  an $m\times m$ kernel matrix $K$ (belonging to $\sK_{2d,m}$ for some appropriate $d$)
  and target values in $[-1,+1]$ such that the returned
  coefficient vector $\balpha$ satisfies
  \[
  \left(\frac{1}{m}\sum_{t=1}^{m}
  \left|\balpha^\top K \be_t-y_t\right|\right)
  -\min_{\balpha\,:\,\balpha^\top K \balpha\leq 2}\frac{1}{m}\sum_{t=1}^{m}
  \left|\balpha^\top K \be_t-y_t\right|
  \geq \frac{c'}{\sqrt[4]{B}}~.
  \]
\end{cor}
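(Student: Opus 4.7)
The plan is to derive Corollary~\ref{cor:sqrt4B} directly from Theorem~\ref{thm:absmain} by choosing the rank parameter $d$ optimally as a function of $B$. The two hypotheses of the theorem translate to a sandwich constraint on $d$: the sample-size condition $m\geq 2^{7}d$ gives an upper bound $d\leq m/128$, while the budget condition $B<\tfrac{3}{50}d^{2}$ gives a lower bound $d>\sqrt{50B/3}$. These are simultaneously satisfiable exactly when $\sqrt{50B/3}\leq m/128$, and this suggests the universal constant should be taken as $c=3/(50\cdot 128^{2})$ (or any smaller value).

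First I would fix $d=\bigl\lceil\sqrt{50B/3}\bigr\rceil+1$, which by construction satisfies $d>\sqrt{50B/3}$, hence $B<\tfrac{3}{50}d^{2}$. Under the assumption $B\leq cm^{2}$ this choice also satisfies $d\leq m/128$, so Theorem~\ref{thm:absmain} applies and produces a kernel matrix $K\in\sK_{2d,m}$ and targets in $[-1,+1]$ for which the algorithm's excess error is at least $1/(70\sqrt{d})$. To finish, I would bound $d$ from above in terms of $B$: for any $B\geq 1$, $d\leq\sqrt{50B/3}+2\leq C\sqrt{B}$ for a small explicit constant $C$, hence $\sqrt{d}\leq C' B^{1/4}$ and therefore $1/(70\sqrt{d})\geq c'/B^{1/4}$ for a suitable universal $c'$.

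There is no real obstacle here; the work is purely book-keeping on the constants and verifying that the chosen $d$ simultaneously falls in the permitted range. The one mild subtlety is handling very small $B$ (the bound $d\leq C\sqrt{B}$ needs a small additive slack because of the ceiling), but this is absorbed into the universal constant $c'$. The only nontrivial underlying statement is Theorem~\ref{thm:absmain} itself, which is already taken as given at this stage of the paper.
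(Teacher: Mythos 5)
Your proposal is correct and follows the same route the paper clearly intends: instantiate \thmref{thm:absmain} with $d=\Theta\bigl(\sqrt{B}\bigr)$ so that $B<\tfrac{3}{50}d^2$ holds, then translate $1/(70\sqrt{d})$ into $c'/B^{1/4}$, with $B\leq cm^2$ ensuring $m\geq 2^7 d$. (The paper's analogous corollaries use $d=\bigl\lceil\sqrt{100B/3}\,\bigr\rceil$, which secures the strict budget inequality via the extra factor of two rather than your ``$+1$''; both work, and your explicit note that the final $c$ must shrink slightly to absorb the ceiling and additive slack is the right book-keeping.)
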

In words, the attainable error given a budget of $B$ cannot go down faster
than $1/\sqrt[4]{B}$. Next, we show that this is in fact the optimal rate,
and is achieved by the following simple strategy:
\begin{enumerate}
\item Given a training set $\{(\bx_t,y_t)\}_{t=1}^{m}$ of size $m$, sample
    $\lfloor\sqrt{B}\rfloor$ training examples uniformly at random (with
    replacement), getting
    $\{(\bx_{t_j},y_{t_j})\}_{j=1}^{\lfloor\sqrt{B}\rfloor}$.
\item Compute the kernel matrix $\widehat{K}\in \reals^{\lfloor\sqrt{B}\rfloor
    \times \lfloor\sqrt{B}\rfloor}$ defined as
    $\widehat{K}_{j,j'}=k(\bx_{t_j},\bx_{t_{j'}})$, using at most $B$ queries.
\item Solve the kernel learning problem on the sampled set, getting a
    coefficient vector $\widehat{\balpha}$:
    \[
    \min_{\widehat{\balpha}\,:\,\widehat{\balpha}^\top \hat{K} \widehat{\balpha}\leq 2}\left(\frac{1}{\lfloor\sqrt{B}\rfloor}\sum_{j=1}^{\lfloor\sqrt{B}\rfloor}
    \left|\widehat{\balpha}^\top \widehat{K} \be_j-y_{t_j}\right|\right)~.
    \]
\item Return the coefficient vector $\balpha$ such that
    $\alpha_{t_j}=\widehat{\alpha}_j$ for $j=1,\ldots,\lfloor \sqrt{B}\rfloor$,
    and $\alpha_t=0$ otherwise.
\end{enumerate}
Essentially, this strategy approximately solves the original problem by
drawing a subset of the training data ---small enough so that we can compute
its kernel matrix in full--- and solving the learning problem on that data.
Since we use a sample of size $\lfloor \sqrt{B}\rfloor$, then by standard
generalization guarantees for learning bounded-norm predictors using
Lipschitz loss functions (e.g., \cite{kakade2009complexity}), we get a generalization error
upper bound of
\[
\Ocal\left(\frac{1}{\sqrt{\lfloor \sqrt{B}\rfloor}}\right)=\Ocal\left(\frac{1}{\sqrt[4]{B}}\right)
\]
which matches the lower bound in \corref{cor:sqrt4B} up to constants.

To summarize, we see that with the absolute loss, given a constraint on the
number of kernel evaluations, there exist no better method than throwing away
most of the data, and learning on a sufficiently small subset. Moreover, any
method using a non-trivial budget (significantly smaller than $m^2$) must
suffer a performance degradation.

\subsection{Soft Regularization, General Losses}\label{subsec:reg}

Having obtained an essentially tight result for the absolute loss, it is
natural to ask what can be obtained for more generic losses. To study this
question, it will be convenient to shift to the setting where the domain
$\sW$ is the entire Hilbert space, and we use a regularization term.
Following \eqref{eq:kernelerm}, this reduces to solving
\[
\min_{\balpha}\frac{1}{m}\sum_{t=1}^{m}\ell\left(\balpha^\top K \be_t,y_t\right)+\frac{\lambda}{2}\balpha^\top K \balpha~.
\]

We start by defining the main quantity we are interested in,
\[
\Delta_\ell(m,\alpha,K,\lambda,y)=\left(\frac{1}{m}\sum_{t=1}^{m}\ell\left(\balpha^\top K \be_t,y_t\right)+\frac{\lambda}{2}\balpha^\top K \balpha\right)
    - \min_{\balpha}\left(\frac{1}{m}\sum_{t=1}^{m}\ell\left(\balpha^\top K \be_t,y_t\right)+\frac{\lambda}{2}\balpha^\top K \balpha\right),
\]
where $\ell$ is a loss function.

First, we provide a general result, which applies to any non-negative loss
function, and then draw from it corollaries for specific losses:
\begin{theorem}\label{thm:lossmain}
Suppose the loss function $\ell$ is non-negative. For any rank parameter $d$,
any sample size $m\geq 2^{7}d$, any budget $B<\frac{3}{50}d^2$, and for any
budgeted algorithm, there exists a kernel matrix $K\in \sK_{2d,m}$ and target
values $y_1,\ldots,y_m$ in $\sY$, such that the returned coefficient vector
$\balpha$ satisfies
\begin{align}
 \label{eq:lossmaineq}
\Delta_\ell(m,\alpha,K,\lambda,y) \geq
    \frac{1}{60}\lambda d~\min_{p\in \left[\frac{1}{2},2\right]}\max_{y\in\sY}(2u_1^*-u_2^*)^2
\end{align}
where
\[
u^*_1 = \argmin_{u} \ell(u,y)+p\lambda d u^2
\qquad\text{and}\qquad
u_2^* = \argmin_{u}\ell(u,y)+\frac{p\lambda d}{2} u^2~.
\]
\end{theorem}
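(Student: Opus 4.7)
The plan is to follow the template of Theorem~\ref{thm:absmain}, adapting it to the regularized setting with a general non-negative loss. I would construct a random hard instance from $\sK_{2d,m}$ with blocks of roughly uniform size (varying by at most a factor of $2$, to accommodate the parameter $p\in[1/2,2]$), and set the target values to be constant within each block with value $y_b\in\sY$ chosen adversarially.

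The first step is a reduction to a separable per-block optimization. For any block-structured $K$, writing $u_b=\sum_{j\in b}\alpha_j$ for each block $b$ of size $n_b$, we have $\balpha^\top K\be_t=u_{b(t)}$ and $\balpha^\top K\balpha=\sum_b u_b^2$, so
\[
\frac{1}{m}\sum_{t=1}^m\ell(\balpha^\top K\be_t,y_t)+\frac{\lambda}{2}\balpha^\top K\balpha
=\sum_b\left(\frac{n_b}{m}\ell(u_b,y_b)+\frac{\lambda}{2}u_b^2\right).
\]
The optimum is achieved coordinate-wise at $u_b^{*}=\argmin_u\ell(u,y_b)+\frac{\lambda m}{2n_b}u^2$: for $n_b=m/(2d)$ this is precisely $u_1^*$, and for $n_b=m/d$ it is $u_2^*$, which explains the roles of the two optimizers in the bound.

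The second step is the combinatorial core. By Yao's principle, I would randomize the permutation defining the block layout and argue that with $B<3d^2/50$ queries, the expected number of kernel entries returning $1$ is at most $B/(2d)<3d/100$, so the algorithm can identify at most $O(d)$ blocks. The remaining $\Omega(d)$ blocks form a permutation-exchangeable collection from the algorithm's perspective; by a symmetrization argument, I may assume its output is invariant under permutations of these unidentified blocks, forcing it to commit to a common coefficient pattern across pairs of them.

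The third step converts indistinguishability into excess loss. When the algorithm is forced to apply a common value across a pair of indistinguishable blocks, the best achievable per-block $u$ corresponds to the merged-block optimum $u_2^*$ split evenly across the pair (each sub-block receiving $u_2^*/2$), while the true per-block optimum is $u_1^*$. Expanding the per-block regularized objective around these two values produces a gap of order $\lambda(2u_1^*-u_2^*)^2$ per pair, using $u_2^*/2-u_1^*=-(2u_1^*-u_2^*)/2$. Summing over $\Omega(d)$ unidentified pairs, together with the adversary's choice of $y$ and $p$, yields the stated $\frac{1}{60}\lambda d\min_p\max_y(2u_1^*-u_2^*)^2$ bound. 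The main obstacle will be making the symmetrization rigorous: ensuring that the algorithm cannot escape the bound through a clever asymmetric allocation of coefficients across unidentified blocks, which should follow from averaging over the random permutation prior combined with convexity of the per-pair regularized objective.
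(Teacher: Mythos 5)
Your proposal has the right skeleton and arrives at the same structural decomposition the paper uses: a random block-diagonal instance from $\sK_{2d,m}$, reduction of both the loss and the regularizer to a per-block problem, interpretation of $u_1^*$ and $u_2^*$ as the optima of a subsampled regularized loss with regularization weight $p\lambda d$ and $p\lambda d/2$ respectively, and a strong-convexity argument that produces the quantity $(2u_1^* - u_2^*)^2$. If you work out the symmetrized per-block min--max game — algorithm plays $u=v$, adversary chooses whether the pair is merged or split, and you lower-bound each branch via $\lambda$-strong convexity — you do indeed get a minimizer at $u=v=(u_1^*+u_2^*)/3$ and a value $\tfrac{\lambda}{12}(2u_1^*-u_2^*)^2$, matching the paper's per-block contribution.

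That said, two ingredients in your sketch are substantively different from, and weaker than, what the paper establishes, and they are exactly where the real work lies. First, your construction should make the binary split-vs-merged structure explicit: the paper draws a hidden bit $\sigma_i\in\{0,1\}$ per meta-block that decides whether the block of size $\approx m/d$ is a single all-ones block or two disjoint sub-blocks of size $\approx m/2d$. The hardness does not come from a permutation-exchangeable ``collection of unidentified blocks''; it comes from the algorithm being unable to learn $\sigma_i$ for most $i$. Without that, the per-block analysis you wrote (a single $u_b$ per block with a fixed size $n_b$) is not the right object, because the algorithm has the freedom to put different coefficients on the two sub-blocks of a split block, and your symmetrization claim — that you can assume equal coefficients — needs to be justified. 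The paper does not symmetrize at all; it proves an independence statement (Lemma~\ref{lem:indep}): conditioned on the block being unqueried, the algorithm's $(\beta_{i,1},\beta_{i,2})$ is independent of $\sigma_i$, which then feeds into the explicit $\min_{u,v}\max_\sigma$ game without restricting to the diagonal. That route sidesteps the ``clever asymmetric allocation'' worry you flagged as your main obstacle.

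Second, the combinatorial claim that $B<\tfrac{3}{50}d^2$ queries discover only $O(d)$ blocks is not a consequence of ``expected number of $1$-entries seen is $B/(2d)$.'' An adaptive algorithm that hits one entry of a block can cheaply confirm many more within the same block, so a naive hit-count bound does not control the number of \emph{distinct} blocks discovered; and a block can also be indirectly ruled in or out using patterns of $0$'s. The paper's Lemma~\ref{lem:block} handles adaptivity by a careful accounting (the quantity $P_t$ and the two-case analysis giving $N\le\sqrt{2B}$) and concludes $\sum_i\Pr(E_i)>d/2$, which is the precise form you need to plug into the per-block bound together with a concentration step on block sizes (Lemma~\ref{lem:getrid}, which is what actually justifies restricting $p$ to $[\tfrac12,2]$). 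These two lemmas are the heart of the proof; your proposal leaves both as informal assertions.
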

The proof and the required construction appears in \subsecref{subsec:general}.

Roughly speaking, to get a non-trivial bound, we need the loss to be such
that when the regularization parameter is order of $\lambda d$, then scaling
it by a factor of $2$ changes the location of the optimum $u^*$ by a factor
different than $2$. For instance, this rules out linear losses of the form
$\ell(u,y)=yu$. For such a loss, we have
\[
u^{*}_1=\arg\min_u \frac{n}{m}yu+\lambda u^2 = -\frac{ny}{2\lambda m}
\qquad\text{and}\qquad
u^{*}_2=\arg\min_u \frac{n}{m}yu+\frac{\lambda}{2} u^2 = -\frac{ny}{\lambda m}~.
\]
Thus we get that $(2u_1^*-u_2^*)^2=0$ and the lower bound is trivially $0$.
While this may seem at first like an unsatisfactory bound, in fact this
should be expected: For linear loss and no domain constraints, we don't need
to observe the kernel matrix $K$ at all in order to find the optimal
solution! To see this, note that the optimization problem in
\eqref{eq:kernelerm} reduces to
\[
\min_{\balpha}\frac{1}{m}\sum_{t=1}^{m}y_t \balpha^\top K \be_t+\frac{\lambda}{2}\balpha^\top K \balpha
\]
or, equivalently,
\[
\min_{\balpha}~\balpha^\top K \bv+\frac{\lambda}{2}\balpha^\top K \balpha
\]
where $\bv$ is a known vector and $K$ is the partially-unknown kernel matrix.
Differentiating the expression by $\balpha$ and equating to $\bzero$, getting
\[
K\bv+\lambda K \balpha = \bzero~.
\]
Thus, an optimum of this problem is simply $-\frac{1}{\lambda}\bv$, regardless
of what is $K$. This shows that for linear losses, we can find the optimal
predictor with zero queries of the kernel matrix.

Thus, the kernel learning problem is non-trivial only for non-linear losses,
which we now turn to examine in more detail.

\subsubsection{Absolute Loss}

First, let us consider again the absolute loss in this setting. We easily get
the following corollary of \thmref{thm:lossmain}:

\begin{cor}\label{cor:abs}
Let $\ell_1(u,y)=|u-y|$ be the absolute loss. There exist universal constants
$c,c'>0$, such that if $B\leq cm^2$, then for any budgeted algorithm there
exists an $m\times m$ kernel matrix $K$ and target values $y$ such that
$\Delta_{\ell_1}(m,\alpha,K,\lambda,y)$
is lower bounded by $\frac{c'}{\lambda\sqrt{B}}$.
\end{cor}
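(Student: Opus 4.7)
The plan is to derive Corollary~\ref{cor:abs} as an immediate consequence of Theorem~\ref{thm:lossmain}, by computing the quantity $\min_{p\in[1/2,2]}\max_{y\in\sY}(2u_1^*-u_2^*)^2$ for the absolute loss and then choosing the rank parameter $d$ as small as the theorem's hypothesis allows so as to convert a $1/d$ bound into a $1/\sqrt{B}$ bound.

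First I fix $p\in[1/2,2]$ and set $a=p\lambda d$, so that $u_1^*=\argmin_u(|u-y|+au^2)$ and $u_2^*=\argmin_u(|u-y|+(a/2)u^2)$. A standard one-dimensional soft-thresholding calculation (check the subgradient of $|u-y|+bu^2$ at $u=y$ versus at $u=\sgn(y)/(2b)$) shows that the unique minimizer of $|u-y|+bu^2$ equals $y$ when $|y|\leq 1/(2b)$ and equals $\sgn(y)/(2b)$ otherwise. Restricting to $y\geq 0$ by symmetry, this splits into three sub-regimes: if $y\leq 1/(2a)$ then $u_1^*=u_2^*=y$ and $2u_1^*-u_2^*=y$; if $1/(2a)<y\leq 1/a$ then $u_1^*=1/(2a)$ and $u_2^*=y$, so $2u_1^*-u_2^*=1/a-y$; and if $y>1/a$ then $u_1^*=1/(2a)$ and $u_2^*=1/a$, so $2u_1^*-u_2^*=0$.

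Maximizing $(2u_1^*-u_2^*)^2$ across these regimes, the supremum is attained at $y=1/(2a)$ with value $1/(4a^2)=1/(4p^2\lambda^2 d^2)$. Taking the subsequent minimum over $p\in[1/2,2]$ is achieved at $p=2$, yielding $1/(16\lambda^2 d^2)$. Substituting this back into Theorem~\ref{thm:lossmain} produces
\[
\Delta_{\ell_1}(m,\alpha,K,\lambda,y)\;\geq\;\frac{1}{60}\,\lambda d\cdot\frac{1}{16\lambda^2 d^2}\;=\;\frac{1}{960\,\lambda d}.
\]

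To translate this $1/d$ dependence into the advertised $1/\sqrt{B}$ dependence, I choose $d$ just large enough to meet the hypothesis $B<(3/50)d^2$ of Theorem~\ref{thm:lossmain}, namely $d=\lceil\sqrt{50B/3}\rceil+1$, so that $d=\Theta(\sqrt{B})$. The remaining hypothesis $m\geq 2^{7}d$ then becomes $B\leq c m^2$ for a suitable universal constant $c>0$, which is exactly the premise of the corollary, and plugging $d=\Theta(\sqrt{B})$ into $1/(960\,\lambda d)$ yields the claimed bound $c'/(\lambda\sqrt{B})$. The only step requiring any genuine care is the soft-thresholding case analysis used to compute the two argmins; once those are in hand, the rest is elementary algebra and parameter bookkeeping.
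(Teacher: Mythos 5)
Your proposal is correct and follows essentially the same path as the paper's own proof: both derive the bound by instantiating Theorem~\ref{thm:lossmain} with a carefully chosen $y$, observing that for the absolute loss the minimizer is a soft-threshold of $y$, and then choosing $d=\Theta(\sqrt{B})$ just large enough to satisfy the theorem's hypothesis $B<\tfrac{3}{50}d^2$. The only difference is cosmetic: the paper simply verifies that the particular choice $y=\tfrac{1}{2p\lambda d}$ makes $u_1^*=u_2^*=y$, whereas you run the complete three-case soft-thresholding analysis to show that this $y$ is in fact the one attaining the supremum of $(2u_1^*-u_2^*)^2$; both yield the same value $\tfrac{1}{4p^2\lambda^2 d^2}$, the same $\min_p$ at $p=2$, and the same final constant $\tfrac{1}{960\lambda d}$ before plugging in $d=\Theta(\sqrt{B})$.
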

\begin{proof}
  To apply \thmref{thm:lossmain}, let us compute $(2u_1^*-u_2^*)^2$, where we use the particular choice $y=\frac{1}{2p\lambda d}$.
  It is readily verified that $u_1^*=u_2^*=y=\frac{1}{2p\lambda d}$, leading
  to the lower bound
\begin{align*}
  \frac{1}{60}\lambda d~ \min_{p\in \left[\frac{1}{2},2\right]}(u_1^*)^2
  &=   \frac{1}{60}\lambda d~ \min_{p\in \left[\frac{1}{2},2\right]}\left(\frac{1}{2p \lambda d}\right)^2 \\
  &= \frac{1}{60}\lambda d\left(\frac{1}{4\lambda d}\right)^2=\frac{1}{960\lambda d}~.
\end{align*}
  In particular, suppose we choose $d=\left\lceil \sqrt{\frac{100}{3}B}~\right\rceil$. Then
  we get a lower bound of $\frac{c'}{\lambda \sqrt{B}}$ for
  $c'=2^{-13}$. The conditions of \thmref{thm:lossmain} are satisfied if $m\geq 2^7d = 2^7\left\lceil \sqrt{\frac{100}{3}B}~\right\rceil$
  and  $B< \frac{3}{50}d^2 = \frac{3}{50}\left(\left\lceil
  \sqrt{\frac{100}{3}B}~\right\rceil\right)^2$. The latter always
  holds, whereas the former is indeed true if $B$ is smaller than $cm^2$ for
  $c=2^{-20}$.
\end{proof}
As in the setting of \thmref{thm:absmain}, this lower bound is tight, and we
can get a matching $\Ocal\bigl(1/\lambda \sqrt{B}\bigr)$ upper bound by learning with a
random sub-sample of $\Theta\bigl(\sqrt{B}\bigr)$ training examples, using
generalization bounds for minimizers of strongly-convex and Lipschitz
stochastic optimization problems \cite{sridharan2009fast}.


Note that, unlike our other lower bounds, \corref{cor:abs} is proven using a different choice of $y$ for each $\lambda$.
It is not clear whether this requirement is real, or is simply an artifact of our proof technique.

\subsubsection{Hinge Loss}

Intuitively, the proof of \corref{cor:abs} relied on the absolute loss having
a non-smooth ``kink'' at $\Theta(1/\lambda \sqrt{B})$, which prevented the
optimal $u_1^*,u_2^*$ from moving as a result of the changed regularization
parameter. Results of similar flavor can be obtained with any other loss
which has an optimum at a non-smooth point. However, when we do not control
the location of the ``kink'' the results may be weaker. A good example is the
hinge loss, $\ell_h(u,y)=\max\{0,1-uy\}$, which is non-differentiable at the
fixed location $p=1$:
\begin{cor}\label{cor:hinge}
Let $\ell_h(u,y)=\max\{0,1-uy\}$ be the hinge loss. There exist universal
constants $c,c',c''>0$, such that if $\lambda\leq \frac{1}{4}$ and $B <
\frac{c}{\lambda^2} \le c' m^2$, then for any budgeted algorithm, there exist
an $m\times m$ kernel matrix $K$ and target values $y$ in $\{-1,+1\}$ such that
$\Delta_{\ell_h}(m,\alpha,K,\lambda,y)$
is lower bounded by $c''$.
\end{cor}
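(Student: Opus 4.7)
The approach is to specialize \thmref{thm:lossmain} to the hinge loss. The two ingredients needed are (i) a computation of $u_1^*$ and $u_2^*$ showing that $(2u_1^*-u_2^*)^2$ is bounded below by a positive constant on the relevant range of $p$, and (ii) a choice of the rank parameter $d$ that turns the factor $\lambda d$ appearing in the lower bound into a constant of its own, while translating the budget condition $B<\tfrac{3}{50}d^2$ from \thmref{thm:lossmain} into the desired form $B<c/\lambda^2$.

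For the first step I would fix $y=1$ (the sign of $y$ does not matter by symmetry) and minimize $\max\{0,1-u\}+c u^2$ over $u\in\reals$ for a generic $c>0$. A subgradient check at $u=1$ shows that $u=1$ is the minimizer whenever $c\le 1/2$, because the left derivative $-1+2c$ is then non-positive and the right derivative $2c$ is non-negative; when $c>1/2$ the minimizer instead equals $1/(2c)<1$. Substituting $c=p\lambda d$ for $u_1^*$ and $c=p\lambda d/2$ for $u_2^*$, I need both $p\lambda d\le 1/2$ and $p\lambda d/2\le 1/2$ to hold for every $p\in[\tfrac12,2]$; the binding constraint is $p=2$ in the first of these, which demands $\lambda d\le 1/4$. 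Under this constraint $u_1^*=u_2^*=1$ identically in $p$, so $(2u_1^*-u_2^*)^2=1$ for every $p$ and every admissible $y$. The right-hand side of \eqref{eq:lossmaineq} then simplifies to $\lambda d/60$.

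For the second step I would set $d=\lfloor 1/(4\lambda)\rfloor$. The hypothesis $\lambda\le 1/4$ guarantees $d\ge 1$, and a short case split on $\lambda\le 1/8$ versus $\lambda\in(1/8,1/4]$ shows that $\lambda d\in[1/8,1/4]$ in both cases: in the former because $d\ge 1/(4\lambda)-1\ge 1/(8\lambda)$, and in the latter because $d=1$ and $\lambda d=\lambda>1/8$. Hence the lower bound becomes at least $1/480$, which is the constant $c''$. Finally, $d\ge 1/(8\lambda)$ gives $\tfrac{3}{50}d^2\ge 3/(3200\lambda^2)$, so the budget condition $B<c/\lambda^2$ with $c=3/3200$ implies $B<\tfrac{3}{50}d^2$; and the sample-size condition $m\ge 2^7 d$ follows from $B<c/\lambda^2\le c'm^2$ provided $c'\le c/1024$, since $d\le 1/(4\lambda)$ entails $2^7 d\le 32/\lambda$.

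The hard part is really just the first step, and it is more delicate than hard: the whole corollary rests on the observation that the hinge loss has its kink at the fixed location $u=1$, so the two regularized optima $u_1^*,u_2^*$ get pinned to that kink for an entire range of regularization strengths, making the separation $|2u_1^*-u_2^*|$ equal to $1$ rather than shrinking with $\lambda$. This is exactly why the dependence on $\lambda$ enters the final statement only through the condition $B<c/\lambda^2$ and not through the magnitude of the gap itself, and it also explains the restriction to $\lambda\le 1/4$: this is precisely the range in which $d\ge 1$ is feasible under the natural choice $d=\lfloor 1/(4\lambda)\rfloor$.
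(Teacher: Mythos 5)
Your proof follows the paper's route---specialize \thmref{thm:lossmain} to the hinge loss with $y=1$, observe that both $u_1^*$ and $u_2^*$ are pinned at the kink $u=1$ so that $(2u_1^*-u_2^*)^2=1$, and then choose $d$ on the order of $1/\lambda$ so that $\lambda d$ is a constant and the budget condition $B<\tfrac{3}{50}d^2$ becomes $B<c/\lambda^2$. The interesting point is that your subgradient analysis is actually more careful than the paper's: the paper claims $u_1^*=u_2^*=1$ ``as long as $p\lambda d\le 1$'' and accordingly sets $d=\lfloor 1/(2\lambda)\rfloor$ (so $\lambda d\le \tfrac12$). But as you correctly derive, $u_1^*=\argmin_u \max\{0,1-u\}+p\lambda d\, u^2$ is pinned at $u=1$ only when the regularization coefficient $p\lambda d$ is at most $\tfrac12$ (the left subgradient $-1+2p\lambda d$ must be nonpositive); for $p\lambda d\in(\tfrac12,1]$ one instead gets $u_1^*=1/(2p\lambda d)<1$, and indeed at $\lambda d=\tfrac12$, $p=2$ the quantity $(2u_1^*-u_2^*)^2$ vanishes. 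So the binding requirement over $p\in[\tfrac12,2]$ is $\lambda d\le\tfrac14$, which is exactly what you impose, and your choice $d=\lfloor 1/(4\lambda)\rfloor$ with $\lambda\le\tfrac14$ keeps $\lambda d\in[\tfrac18,\tfrac14]$ via the case split. Your closing constant-chasing ($c=3/3200$, $c''=1/480$, $c'\le c/1024$) is fine. In short: same argument and same structure as the paper, but you have fixed a factor-of-two slip in the regularization-threshold condition; the universal constants come out slightly different but the corollary's statement is unaffected.
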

\begin{proof}
  To apply \thmref{thm:lossmain}, let us compute $(2u_1^*-u_2^*)^2$, where we use the particular choice
  $y=1$. It is readily verified that $u_1^*=u_2^*=1$, as long as $p\lambda d\leq 1$, and is certainly satisfied for any $p\in \left[\frac{1}{2},2\right]$
  by assuming $\lambda d\leq \frac{1}{2}$.
  Therefore, if $\lambda d\leq \frac{1}{2}$, then in \thmref{thm:lossmain},
  we get $u_1^*=u_2^*=1$, and thus a lower bound of
  $\frac{1}{60}\lambda d \cdot 1^2~=~ \frac{1}{60}\lambda d$.

In particular, suppose we pick $d=\left\lfloor\tfrac{1}{2\lambda}\right\rfloor$. Since we assume $\lambda\leq \tfrac{1}{4}$, this means that the lower bound above is $\frac{1}{60}\lambda \left\lfloor\tfrac{1}{2\lambda}\right\rfloor \ge \tfrac{1}{240}=c''$. The conditions of \thmref{thm:lossmain} are satisfied if $m\geq 2^7 d = 2^7\left\lfloor\tfrac{1}{2\lambda}\right\rfloor$ and $B < \frac{3}{50} d^2=\frac{3}{50}\left(\left\lfloor\tfrac{1}{2\lambda}\right\rfloor\right)^2<\frac{3}{200}\frac{1}{\lambda^2}=\frac{c}{\lambda^2}<
\frac{c}{2^{10}}m^2=c'm^2$, which are indeed implied by the corollary's conditions.
\end{proof}
Unlike the bound for the absolute loss, here the result is weaker, and only
quantifies a regime under which sub-constant error is impossible. In
particular, the condition $B = \Ocal\bigl(\tfrac{1}{\lambda^2}\bigr)$ is not interesting for
constant $\lambda$. However, in learning problems $\lambda$ usually scales
down as $m^{-q}$ where $q\geq 1/2$ and often $q=1$. In that case, we get
constant error as long as $B\ll m^{2q}$, which establishes that learning is
impossible for a budget smaller than a quantity in the range from $\Omega(m)$ to $\Omega(m^2)$, depending on the value of $q$. For $q=1$, that is $\lambda=\Theta(1/m)$, learning is impossible
without querying a constant fraction of the kernel matrix.

Moreover, it is possible to show that our lower bound is tight, in terms of
identifying the threshold for making the error sub-constant. As before, we
consider the strategy of sub-sampling $\sqrt{B}$ training examples and
learning with respect to the induced kernel matrix. Since we use $\sqrt{B}$
examples and $\lambda$-strongly convex regularization, the expected error
scales as $\frac{1}{\lambda\sqrt{B}}$ \cite{sridharan2009fast}. This is
sub-constant in the regime $B=\omega(1/\lambda^2)$, and matches our lower
bound. We emphasize that when $B$ is $\omega(1/\lambda^2)$, we do
not have a non-trivial lower bound, and it remains an open problem to
understand what can be attained for the hinge loss in this regime.

Another interesting consequence of the corollary is the required budget as a
function of the norm of a ``good predictor'' we want to compete with. In
\cite{CoShSr12}, several algorithmic approaches have been studied, which were
all shown to require $\Omega(\norm{\bu}^4)$ kernel evaluations to be
competitive with a given predictor $\bu$, even in the ``realizable'' case
where the predictor attains zero average hinge loss. An examination of the
proof of theorem 2 reveals that the construction is such that there exists a
predictor $\bu$ which attains zero hinge loss on all the examples, and whose
norm\footnote{To see this, recall that we use a block-diagonal kernel matrix
composed of at most $2d$ all-ones blocks, and where $y=1$ always. So by
picking $\alpha_t=1/n_i$ for any index $t$ in block $i$ (where $n_i$ is the
size of the block), we get zero hinge loss, and the norm is
$\sqrt{\balpha^\top K\balpha}\leq \sqrt{\sum_{i} 1} \leq \sqrt{2d}$.
Moreover, in the proof of \corref{cor:hinge} we pick $d=\left\lfloor
\frac{1}{2\lambda}\right\rfloor$, so the norm is
  $\Ocal(1/\sqrt{\lambda})$.}
is $\Ocal(1/\sqrt{\lambda})$. \corref{cor:hinge} shows that the budget must
be at least $\Omega(1/\lambda^2)=\Omega(\norm{\bu}^4)$ to get sub-constant
error in the worst case. Although our setting is slightly different than
\cite{CoShSr12}, this provides evidence that the
$\Omega(\norm{\bu}^4)$ bounds in \cite{CoShSr12} are tight in
terms of the norm dependence.

\subsubsection{Squared Loss}

In the case of absolute loss and hinge loss, the results depend on a
non-differentiable point in the loss function. It is thus natural to conclude
by considering a smooth differentiable loss, such as the squared loss:
\begin{cor}\label{cor:squared}
Let $\ell_2(u,y)=(u-y)^2$ be the squared loss. There exist universal constants
$c,c'>0$, such that
\begin{itemize}
  \item If $1\leq B\leq \frac{1}{\lambda^2}\leq cm^2$, then
      for any budgeted algorithm there exists an $m\times m$ kernel matrix
      and target values in $[-1,+1]$ such that
      $\Delta_{\ell_2}(m,\alpha,K,\lambda,y)$ is
      lower bounded by $c'$.
  \item If $\frac{1}{\lambda^2}\leq B\leq cm^2$, then for any budgeted
      algorithm there exists an $m\times m$ kernel matrix and target values
      in $[-1,+1]$ such that
      $\Delta_{\ell_2}(m,\alpha,K,\lambda,y)$ is lower bounded by
      $c'(\lambda \sqrt{B})^{-3}$.
\end{itemize}
\end{cor}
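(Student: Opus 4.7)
The plan is to mirror the structure of the proofs of Corollary \ref{cor:abs} and Corollary \ref{cor:hinge}: invoke Theorem \ref{thm:lossmain} with the choice $y=1$, compute $u_1^*$ and $u_2^*$ explicitly, bound $(2u_1^*-u_2^*)^2$ from below over $p\in [1/2,2]$, and then choose the rank parameter $d$ so that the resulting lower bound matches the two regimes in the corollary.

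For the squared loss $\ell_2(u,1)=(u-1)^2$ the minimizers are smooth and easily computed:
\[
u_1^* = \arg\min_u (u-1)^2+p\lambda d\, u^2 = \frac{1}{1+p\lambda d},
\qquad
u_2^* = \frac{2}{2+p\lambda d}.
\]
A short algebraic simplification gives
\[
2u_1^*-u_2^* = \frac{2}{(1+p\lambda d)(2+p\lambda d)},
\]
so that $(2u_1^*-u_2^*)^2=4/\bigl((1+p\lambda d)^2(2+p\lambda d)^2\bigr)$. Minimizing over $p\in[1/2,2]$ is attained at $p=2$, and plugging into \thmref{thm:lossmain} yields the clean lower bound
\[
\Delta_{\ell_2}(m,\alpha,K,\lambda,y) \;\geq\; \frac{\lambda d}{15\,(1+2\lambda d)^2(2+2\lambda d)^2}.
\]

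The two cases of the corollary then correspond to two choices of $d$. In the regime $1\leq B\leq 1/\lambda^2$, I would take $d = \lceil C/\lambda\rceil$ for a fixed constant $C>\sqrt{50/3}$; this guarantees both $B<3d^2/50$ and that $\lambda d$ lies in a bounded interval, so the right-hand side above is bounded below by an absolute constant $c'$. In the regime $1/\lambda^2\leq B\leq cm^2$, I would instead take $d=\lceil\sqrt{100B/3}\,\rceil$, exactly as in the proof of \corref{cor:abs}. Then $\lambda d=\Theta(\lambda\sqrt{B})\geq 1$, so $(1+2\lambda d)^2(2+2\lambda d)^2=\Theta((\lambda d)^4)$, and the bound becomes $\Theta(1/(\lambda d)^3)=\Theta((\lambda\sqrt{B})^{-3})$, which is the claimed rate.

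The remaining task is to verify the side conditions $m\geq 2^7 d$ and $B<3d^2/50$ of \thmref{thm:lossmain} in both regimes, and to confirm that $y=1\in[-1,1]$. In regime 1, $d=O(1/\lambda)$ and $1/\lambda\leq \sqrt{c}\,m$ (from $1/\lambda^2\leq cm^2$), so choosing $c$ small enough handles $m\geq 2^7 d$; the choice of $C$ makes $B<3d^2/50$ automatic. In regime 2, the verification is identical in spirit to the one given in \corref{cor:abs}. The main obstacle, and the only nontrivial step, is the algebraic simplification of $2u_1^*-u_2^*$ and the observation that $p=2$ is the worst case in $[1/2,2]$; once this is in place, the two-regime structure falls out automatically by matching $d$ to $1/\lambda$ or to $\sqrt{B}$ depending on which is smaller.
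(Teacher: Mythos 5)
Your proposal is correct and follows essentially the same path as the paper's own proof: invoke \thmref{thm:lossmain} with $y=1$, compute $u_1^* = 1/(1+p\lambda d)$ and $u_2^* = 2/(2+p\lambda d)$, observe that the resulting expression $(2u_1^*-u_2^*)^2$ is decreasing in $p$ on $[1/2,2]$, and then split into the two regimes by choosing $d$ proportional to $1/\lambda$ (when $B\leq 1/\lambda^2$) or to $\sqrt{B}$ (when $B\geq 1/\lambda^2$), exactly as in the proof of \corref{cor:abs}. The only cosmetic difference is that the paper applies the further loosening $2+p\lambda d \geq 1+p\lambda d$ before minimizing over $p$, whereas you keep the factor $(2+p\lambda d)^2$ intact, which gives a marginally tighter constant but the same asymptotic conclusion.
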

This lower bound is weaker than the $\Omega(1/\lambda \sqrt{B})$ lower bound
attained for the absolute loss. This is essentially due to the smoothness of
the squared loss, and we do not know if it is tight. In any case, it proves
that even for the squared loss, at least $1/\lambda^2$ kernel evaluations are
required to get sub-constant error. In learning problems, where $\lambda$
often scales down as $m^{-q}$ (where $q\geq 1/2$ and often $q=1$), we get a
required budget size of $m^{2q}$. This is super-linear when $p>1/2$, and
becomes $m^2$ when $q=1$ -- in other words, we need to compute a constant
portion of the entire $m\times m$ kernel matrix.

\begin{proof}
  To apply \thmref{thm:lossmain}, let us compute $(2u_1^*-u_2^*)^2$. It is readily verified that
  $u_1^*=\frac{y}{1+p\lambda d}$ and $u_2^*=\frac{y}{1+p\lambda d/2}$, leading
  to the lower bound
  \begin{align*}
  &\frac{1}{60}\lambda d~ \min_{p\in \left[\frac{1}{2},2\right]}
  \max_{y\in\sY}\left(\frac{2y}{1+p\lambda d}-\frac{y}{1+p\lambda d/2}\right)^2\\
  &=\frac{1}{60}\lambda d~ \min_{p\in \left[\frac{1}{2},2\right]}\max_{y\in\sY}\left(\frac{y}{\left(1+p\lambda d\right)\left(1+\frac{p\lambda d}{2}\right)}\right)^2\\
  &\geq \frac{1}{60}\lambda d~\min_{p\in \left[\frac{1}{2},2\right]}\max_{y\in\sY}\left(\frac{y}{\left(1+p\lambda d\right)^2}\right)^2.
  \end{align*}
  Taking in particular $y=1$, we get
  \begin{equation}\label{eq:sqlowbound}
  \frac{1}{60}\lambda d~ \min_{p\in \left[\frac{1}{2},2\right]}\left(\frac{1}{\left(1+p\lambda d\right)^2}\right)^2 ~\geq~ \frac{1}{60}\frac{\lambda d}{(1+2\lambda d)^4}\;.
  \end{equation}
  We now consider two ways to pick $d$, corresponding to the two cases considered in the corollary:
  \begin{itemize}
   \item If $1\leq B\leq \frac{1}{\lambda^2}$, we pick $d=\left\lceil
       \sqrt{\frac{100}{3\lambda^2}}~\right\rceil$. Since $\lambda\leq
       1$, we have $d\lambda <7$, and this means that \eqref{eq:sqlowbound} is bounded below by
       $c'=2^{-18}$. The conditions of
       \thmref{thm:lossmain} are satisfied if $m\geq 2^7 d = 2^7\left\lceil
       \sqrt{\frac{100}{3\lambda^2}}~\right\rceil$ and
        $B<\frac{3}{50}d^2=\frac{3}{50}\left(\left\lceil
       \sqrt{\frac{100}{3\lambda^2}}~\right\rceil\right)^2$. These are
       satisfied by assuming $B\leq \frac{1}{\lambda^2}\leq c m^2$
       for 
       $c=2^{-20}$.
    \item If $B\geq \frac{1}{\lambda^2}$, we pick $d=\left\lceil
        \sqrt{\frac{100}{3}B}~\right\rceil$. Plugging this into
        \eqref{eq:sqlowbound} and using the assumption $B\geq
        \frac{1}{\lambda^2}$ (or equivalently, $\lambda \sqrt{B}\geq 1$),
        we get a lower bound of $c'\frac{\lambda \sqrt{B}}{(\lambda
        \sqrt{B})^4} = c'(\lambda \sqrt{B})^{-3}$ for an appropriate
        constant $c'=2^{-18}<1/960$. Moreover, the conditions of
        \thmref{thm:lossmain} are satisfied if $m\geq 2^7d = 2^7\left\lceil
        \sqrt{\frac{100}{3}B}~\right\rceil$ and
        $B<\frac{3}{50}d^2=\frac{3}{50}\left(\left\lceil
        \sqrt{\frac{100}{3}B}~\right\rceil\right)^2$. The latter always
        holds, whereas the former indeed holds if $B$ is less than $c
        m^2$ for 
        $c=2^{-20}$.
  \end{itemize}
This completes the proof.
\end{proof}

\section{Low-Rank Constraints}\label{sec:lowrank}

In this section, we turn to discuss the second broad class of approaches,
which replace the original kernel matrix $K$ by a low-rank approximation
$K'$. As explained earlier, many rank-reduction approaches -- including Nystr\"{o}m method and random features -- use a low-rank approximation $K'$ with entries defined by $K'_{t,t'}=\inner{\phi(\bx_t),\phi(\bx_{t'})}$, where $\{(\bx_t,y_t)\}_{t=1}^{m}$ is the training set and $\phi:\sX\mapsto \reals^d$ is a given feature mapping, typically depending on the data.

The next result shows a lower bound on the error for any such low-rank approximation method when the algorithm used for learning is kernel Ridge Regression (i.e., when we use the squared loss and employ soft regularization).

\begin{theorem}\label{thm:lowrank}
Suppose there exist a kernel function $k$ on $\sX$ and $2d$ points $\bv_1,\ldots,\bv_{2d}\in\sX$ such that $k(\bv_i,\bv_j)=\Ind{\bv_i=\bv_j}$. Then there exists a training set $\{(\bx_t,y_t)\}_{t=1}^{m} \in \bigl(\sX\times\{-1,+1\}\bigr)^{2d}$, with corresponding kernel matrix $K$, such that for any feature mapping $\phi:\sX\mapsto \reals^d$ (possibly depending on the training set), the coefficient vector $\balpha$ returned by the Ridge Regression algorithm operating on the matrix $K'$ with entries $K'_{t,t'}=\inner{\phi(\bx_t),\phi(\bx_{t'})}$ satisfies
\begin{align*}
\Delta_{\ell_2}(m,\alpha,K,\lambda,y)
&\geq \frac{1}{2(\lambda d)^2(1+\lambda d)}
\end{align*}
where $d$ is any upper bound on the rank of $K'$ such that $2d$ divides $m$.
\end{theorem}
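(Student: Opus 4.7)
I would begin by fixing the training set to consist of $n := m/(2d)$ copies of each of the $2d$ orthonormal instances $\bv_1,\ldots,\bv_{2d}$, which makes the true kernel matrix $K$ the block-diagonal element of $\sK_{2d,m}$ with $2d$ all-ones blocks of size $n$ (and rank $2d$). For any $\balpha \in \reals^m$, both the loss and the regularizer depend on $\balpha$ only through the vector of block sums $\bbeta \in \reals^{2d}$, $\beta_i := \sum_{t \in B_i}\alpha_t$, and a short computation collapses the Ridge objective to
\[
\frac{1}{2d}\sum_{i=1}^{2d}(\beta_i - y^{(i)})^2 \;+\; \frac{\lambda}{2}\sum_{i=1}^{2d}\beta_i^2,
\]
which is separable with minimizer $\beta_i^\star = y^{(i)}/(1+\lambda d)$ and constant Hessian $\tfrac{1+\lambda d}{d}I$. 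Expanding to second order gives the clean identity
\[
\Delta_{\ell_2}(m,\balpha,K,\lambda,\by) \;=\; \frac{1+\lambda d}{2d}\norm{\bbeta - \bbeta^\star}^2,
\]
so it suffices to lower bound $\norm{\bbeta - \bbeta^\star}^2$ for the $\bbeta$ produced by the algorithm.

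The main algebraic step is to show that the algorithm's $\balpha = (K' + \tfrac{\lambda m}{2}I)^{-1}\by$ is itself constant on each block, and then to derive a clean closed form for its block sums. Block-constancy follows from uniqueness of the ridge solution together with the invariance of $K'$ and $\by$ under any permutation that preserves blocks. Substituting $\alpha_t = a_i$ for $t \in B_i$ into $(K'+\tfrac{\lambda m}{2}I)\balpha = \by$, using $K'_{t,s} = \phi(\bv_i)^\top\phi(\bv_j)$ for $t \in B_i,\, s \in B_j$, and invoking $m = 2dn$, collapses the $m\times m$ system to a $2d\times 2d$ system whose unique solution is
\[
\bbeta \;=\; \bigl(\Phi\Phi^\top + \lambda d\,I\bigr)^{-1}\by,
\]
where $\Phi \in \reals^{2d \times d}$ has rows $\phi(\bv_i)^\top$. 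Since $\bbeta^\star = \bigl((1+\lambda d)I\bigr)^{-1}\by$ and $\Phi\Phi^\top$ has rank at most $d$, diagonalizing $\Phi\Phi^\top = \sum_{k=1}^{d}\sigma_k^2\,\bu_k\bu_k^\top$ and extending to an orthonormal basis $\bu_1,\ldots,\bu_{2d}$ of $\reals^{2d}$ makes $\Phi\Phi^\top + \lambda d\,I$ act as $\lambda d\,I$ on $U^\perp := \mathrm{span}\{\bu_{d+1},\ldots,\bu_{2d}\}$. Subtracting the two inverses and discarding the non-negative contribution from indices $k \leq d$ leaves
\[
\norm{\bbeta - \bbeta^\star}^2 \;\geq\; \sum_{k=d+1}^{2d}\frac{(\bu_k^\top\by)^2}{(\lambda d)^2(1+\lambda d)^2} \;=\; \frac{\norm{P_{U^\perp}\by}^2}{(\lambda d)^2(1+\lambda d)^2}.
\]

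Combining the displays yields $\Delta_{\ell_2} \geq \frac{\norm{P_{U^\perp}\by}^2}{2d\,(\lambda d)^2(1+\lambda d)}$, so it remains to exhibit labels $\by \in \{-1,+1\}^{2d}$ with $\norm{P_{U^\perp}\by}^2 \geq d$. I would handle this by the probabilistic method: since $\dim U^\perp \geq d$, for $\by$ drawn uniformly from $\{-1,+1\}^{2d}$ we have $\E\norm{P_{U^\perp}\by}^2 = \mathrm{tr}(P_{U^\perp}) = \dim U^\perp \geq d$, so at least one Boolean $\by$ meets the bound. Substituting produces $\Delta_{\ell_2}\geq \tfrac{1}{2(\lambda d)^2(1+\lambda d)}$, as claimed.

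The step I expect to be the main obstacle is the second paragraph's reduction: verifying the block-constant structure of $\balpha$ and cleanly collapsing the $m$-dimensional ridge regression to the $2d$-dimensional identity $\bbeta = (\Phi\Phi^\top + \lambda d\,I)^{-1}\by$ is the crux of the argument. Once that formula is in hand, the eigenspace truncation and the Rademacher averaging over $\by$ are essentially routine.
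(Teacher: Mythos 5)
Your proof is correct and follows essentially the same route as the paper's: reduce the $m$-dimensional problem to the $2d$-dimensional block-sum vector $\bbeta$, exploit strong convexity to express the excess error as a quadratic in $\bbeta - \bbeta^\star$, use the rank deficiency of $G^{K'}=\Phi\Phi^\top$ to isolate a $d$-dimensional nullspace on which the algorithm's $\bbeta$ and the optimum provably disagree, and pick the labels by the probabilistic method over $\{-1,+1\}^{2d}$. The only presentational difference is in deriving $\bbeta=(\Phi\Phi^\top+\lambda d I)^{-1}\by$: the paper constructs an explicit spectral decomposition of the blocked $m\times m$ matrix from that of the $2d\times 2d$ matrix (Lemma~\ref{lem:alphabeta}), whereas you argue block-constancy of $\balpha$ by a symmetry/uniqueness argument and then collapse the normal equations directly --- a cleaner way to reach the same identity.
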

When $\lambda d\geq 1$, we get a $\Omega\bigl((\lambda d)^{-3}\bigr)$ bound. This bears
similarities to the bound in \corref{cor:squared}, which considered the
squared loss in the budgeted setting, where $d$ is replaced by $\sqrt{B}$
(i.e., $\Omega\bigl((\lambda \sqrt{B})^{-3}\bigr)$ when $\lambda\sqrt{B}\geq 1$). The
bound implies that to get sub-constant error, the rank required must be
larger than $1/\lambda$. When $\lambda$ itself scales down with the sample
size $m$, we get that the required rank grows with the sample size. When
$\lambda=1/m$, the required rank is $\Omega(m)$, which means that any
low-rank approximation scheme (where $d\ll m$) will lead to constant error.
As in the case of \corref{cor:squared}, we do not know whether our lower
bound is tight.

\section{Discussion and Open Questions}\label{sec:discussion}

In this paper, we studied fundamental information-theoretic barriers to
efficient kernel learning, focusing on algorithms which either limit the
number of kernel evaluations, or use a low-rank kernel matrix approximation.
We provided several results under various settings, highlighting the
influence of the kernel matrix rank, regularization parameter, norm
constraint and nature of the loss function on the attainable performance.

For general losses and kernel matrices, our conclusion is generally
pessimistic. In particular, when the number of kernel evaluations is bounded,
there are cases where no algorithm attains performance better than a trivial
sub-sampling strategy, where most of the data is thrown away. Also, no
algorithm can work well when the regularization parameter is sufficiently
small or the norm constraint is sufficiently large. On a more optimistic
note, our lower bounds are substantially weaker when dealing with smooth
losses. Although we do not know if these weaker lower bounds are tight, they
may indicate that better kernel learning algorithms are possible by
exploiting smoothness of the loss. Smoothness of the squared loss has been
used in \cite{ZDW13}, but perhaps this property can be utilized more
generally.

In our results, we focused on the problem of minimizing regularized training
error on a given training set. This is a different goal than minimizing
generalization error in a stochastic setting, where the data is assumed to be
drawn i.i.d.\ from some underlying distribution. However, we believe that our
lower bounds should also be applicable in terms of optimizing the risk (or
expected error with respect to the underlying distribution). The main
obstacle is that our lower bounds are proven for a given class of kernel
matrices, which are not induced by an explicit i.i.d.\ sampling process of training
instances. However, inspecting our basic construction in
\subsecref{subsec:budget}, it can be seen that it is very close to such a process:
The kernel is constructed by \emph{pairs} of instances sampled
i.i.d.\ from a finite set $\{\bv_i^1,\bv_i^2\}_{i=1}^{d}$. We believe that all
our results would hold if the instances were to be sampled i.i.d.\ from
$\{\bv_i^1,\bv_i^2\}_{i=1}^{d}$. The reason that we sample pairs is purely
technical, since it ensures that for every $i$, there is an equal number of
$\bv_i^1$ and $\bv_i^2$ in the training set, making the calculations more
tractable. Morally, the same techniques should work with i.i.d.\ sampling, as
long as the probability of sampling $\bv_i^1$ and $\bv_i^2$ are the same for
all $i$.

Our work leaves several questions open. First, while the results for the
absolute loss are tight, we do not know if this is the case for our other
results. Second, the low-rank result in \secref{sec:lowrank} applies only to
squared loss (Ridge Regression), and it would be interesting to extend it to
other losses. Third, it should be possible to extend our results also to
randomized algorithms that query the kernel matrix a number of times bounded
by $B$ only in expectation (with respect to the algorithm's internal
randomization), rather than deterministically. Finally, our results may
indicate that at least for smooth losses, better kernel learning algorithms
are possible, and remain to be discovered.

\subsection*{Acknowledgements}
This research was carried out in part while the authors were attending the
research program on the Mathematics of Machine Learning, at the Centre de
Recerca Matem\`atica of the Universitat Aut\`onoma de Barcelona (Spain).
Partial support is gratefully acknowledged.

\bibliographystyle{plain}
\bibliography{mybib}

\appendix

\section{Proofs}\label{sec:proofs}

\subsection{Construction properties from \secref{sec:budget}}\label{subsec:budget}

We consider a randomized strategy, where the kernel matrix is sampled
randomly from $\sK_{2d,m}$ (according to a distribution $\sD$ to be defined
shortly), and $y_1,\ldots,y_m$ are fixed deterministically in a certain way.
We will analyze what is the best possible performance using any budgeted
algorithm, in expectation over this strategy.

To define the distribution $\sD$, we let $\be_1,\ldots,\be_{2d}$ be the
standard basis vectors in $\reals^{2d}$, and sample a kernel matrix from
$\sD$ as follows:
\begin{itemize}
  \item Pick $\bsigma\in\{0,1\}^d$ uniformly at random.
  \item For all $i\in \{1,\ldots,d\}$, define $\bv_i^1,\bv_i^2$ as
  \begin{itemize}
    \item $\bv_{i}^1=\bv_{i}^2=\be_i$ if $\sigma_i=1$,
    \item $\bv_i^1=\be_i$ and $\bv_i^2=\be_{i+d}$ if $\sigma_i=0$.
  \end{itemize}
  \item For $j=1,\ldots,m/2$, choose $(\bz_{2j-1},\bz_{2j})$ uniformly at
      random from $\{(\bv_i^1,\bv_i^2)\}_{i=1}^{d}$.
  \item Choose a permutation $\pi:\{1,\ldots, m\}\mapsto \{1,\ldots, m\}$
      uniformly at random.
  \item Return the kernel matrix $K$ defined as
      $K_{i,j}=\inner{\bz_{\pi(i)},\bz_{\pi(j)}}$ for all $i,j=1,\ldots,m$.
\end{itemize}
To understand the construction, we begin by noting that $K$ represents the
inner product of a set of vectors, and hence is always positive semidefinite
and a valid kernel matrix. Moreover, $\bz_1,\ldots,\bz_m$ are all in the set
$\{\be_1,\ldots,\be_{2d}\}$, and therefore the resulting kernel matrix equals
(up to permutation of rows and columns) a block-diagonal matrix of the
following form:
  \begin{center}
  \begin{tabular}{|ccccccc|}
    \hline
    $\mathbf{1}$ & $S_1$ & $\mathbf{0}$ & $\mathbf{0}$ & $\cdots$ & $\mathbf{0}$ &  $\mathbf{0}$\\
    $S_1^\top$ & $\mathbf{1}$ & $\mathbf{0}$ & $\mathbf{0}$ & $\cdots$ & $\mathbf{0}$ & $\mathbf{0}$\\
    $\mathbf{0}$ & $\mathbf{0}$ & $\mathbf{1}$ & $S_2$ & $\cdots$ & $\mathbf{0}$ & $\mathbf{0}$\\
    $\mathbf{0}$ & $\mathbf{0}$ & $S_2^\top$ & $\mathbf{1}$ & $\cdots$ & $\mathbf{0}$ & $\mathbf{0}$\\
    $\vdots$ & $\vdots$ & $\vdots$ & $\vdots$ & $\ddots$ & $\vdots$ & $\vdots$ \\
    $\mathbf{0}$ & $\mathbf{0}$ & $\mathbf{0}$ & $\mathbf{0}$ & $\cdots$ & $\mathbf{1}$ & $S_d$\\
    $\mathbf{0}$ & $\mathbf{0}$ & $\mathbf{0}$ & $\mathbf{0}$ & $\cdots$ & $S_d^\top$ & $\mathbf{1}$\\
    \hline
  \end{tabular}
  \end{center}
Here, $S_i$ is an all-zero block if $\sigma_i=0$, and an all-ones block if
$\sigma_i=1$. In other words, the matrix is composed of $d$ blocks, one for
each value of $i=1,\ldots,d$. If $\sigma_i=1$, then block $i$ is a monolithic
all-ones block (corresponding to $\be_i$), and if $\sigma_i=0$, then block
$i$ is composed of two equal-sized sub-blocks (corresponding to $\be_i$ and
to $\be_{d+i}$). This implies that the kernel matrix is indeed in
$\sK_{2d,m}$.

Our proofs rely on the following intuition: To achieve small error, the
learning algorithm must know the values of the entries in
$S_1,S_2,\ldots,S_d$ (i.e., the values of $\bsigma$). However, when $d$ is
large, these blocks are rather small, and their entries are randomly permuted
in the matrix. Thus, any algorithm with a constrained query budget is likely
to ``miss'' many of these blocks.

To simplify the presentation, we will require a few auxiliary definitions.
First, given a kernel matrix $K\in \sK_{2d,m}$ constructed as above, let
\[
T_{i,1} = \{\pi(t):\bz_t=\bv_i^1\} \qquad\text{and}\qquad
T_{i,2} = \{\pi(t):\bz_t=\bv_i^2\}
\]
denote the set of row/column indices in the kernel matrix, corresponding to
instances which were chosen to be $\bv_i^1$ (respectively $\bv_i^2$). Note
that $\{T_{i,1},T_{i,2}\}_{i=1}^{d}$ is a disjoint partition of all indices
$\{1,\ldots,m\}$, and $|T_{i,1}|=|T_{i,2}|$. We then define,
\begin{equation}\label{eq:ndef}
T_i = T_{i,1}\cup T_{i,2} \qquad\mbox{ and }\qquad N_{i}=|T_{i}|,
\end{equation}
and also define,
\begin{equation}\label{eq:betadef}
\beta_{i,1}=\sum_{t\in T_{i,1}}\alpha_t \qquad\mbox{ and }\qquad
\beta_{i,2}=\sum_{t\in T_{i,2}}\alpha_t,
\end{equation}
to be the sum of the corresponding coefficients in the solution $\balpha$
returned by the algorithm. With these definitions, we can re-write the
average loss and the regularization term as follows.
\begin{lemma}\label{lem:beta}
For any coefficient vector $\balpha$,
\begin{align*}
\frac{1}{m}\sum_{t=1}^{m}\ell\left(\balpha^\top K \be_i,y\right) &=
\sum_{i=1}^{d}\frac{N_i}{2m}\bigl(\ell(\beta_{i,1}+\sigma_i \beta_{i,2},y)
+\ell(\sigma_i\beta_{i,1}+\beta_{i,2},y)\bigr)
\mbox{       and}\\
\balpha^\top K \balpha &= \sum_{i=1}^{d}\left(\beta_{i,1}^2+\beta_{i,2}^2+2\sigma_i \beta_{i,1}\beta_{i,2}\right),
\end{align*}
where $\beta_{i,1},\beta_{i,2}$ are defined in \eqref{eq:betadef}.
\end{lemma}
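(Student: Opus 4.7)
The plan is to exploit the fact that $K$ is the Gram matrix of the vectors $\widetilde{\bz}_s \defeq \bz_{\pi(s)}$, each of which is a standard basis vector in $\reals^{2d}$, and to group the contributions according to the block structure.

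\textbf{Step 1 (describe $\widetilde{\bz}_s$ by case).} By the construction, $s\in T_{i,1}$ iff $\widetilde{\bz}_s=\bv_i^1=\be_i$ (regardless of $\sigma_i$), while $s\in T_{i,2}$ iff $\widetilde{\bz}_s=\bv_i^2$, which equals $\be_i$ when $\sigma_i=1$ and $\be_{i+d}$ when $\sigma_i=0$. Compactly, for $s\in T_{i,2}$ we can write $\widetilde{\bz}_s=\sigma_i\be_i+(1-\sigma_i)\be_{i+d}$. Since $\{T_{i,1},T_{i,2}\}_{i=1}^d$ partitions $\{1,\ldots,m\}$ with $|T_{i,1}|=|T_{i,2}|=N_i/2$, every index is accounted for.

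\textbf{Step 2 (loss identity).} Since $K_{s,t}=\inner{\widetilde{\bz}_s,\widetilde{\bz}_t}$ and both arguments are standard basis vectors, $K_{s,t}=\Ind{\widetilde{\bz}_s=\widetilde{\bz}_t}$. Fix $t$ and let $(i,k)$ be the unique pair with $t\in T_{i,k}$. If $k=1$, the $s$ contributing $1$ to $\balpha^\top K\be_t=\sum_s\alpha_s K_{s,t}$ are exactly those in $T_{i,1}$, plus those in $T_{i,2}$ when $\sigma_i=1$; this gives $\balpha^\top K\be_t=\beta_{i,1}+\sigma_i\beta_{i,2}$. The symmetric argument for $k=2$ yields $\balpha^\top K\be_t=\sigma_i\beta_{i,1}+\beta_{i,2}$. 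Summing $\ell(\balpha^\top K\be_t,y)$ over $t$, grouping by which $T_{i,k}$ contains $t$, and using $|T_{i,1}|=|T_{i,2}|=N_i/2$ produces the first identity after dividing by $m$.

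\textbf{Step 3 (regularization identity).} Write $\balpha^\top K\balpha=\bigl\|\sum_{s=1}^m\alpha_s\widetilde{\bz}_s\bigr\|^2$. Using Step~1, the coefficient of $\be_i$ in this linear combination is $\beta_{i,1}+\sigma_i\beta_{i,2}$ and the coefficient of $\be_{i+d}$ is $(1-\sigma_i)\beta_{i,2}$, while $\be_j$ for $j\notin\{i,i+d\mid i\le d\}$ is never used. Squaring and summing gives
\[
\balpha^\top K\balpha=\sum_{i=1}^d\left[(\beta_{i,1}+\sigma_i\beta_{i,2})^2+(1-\sigma_i)^2\beta_{i,2}^2\right],
\]
which collapses to $\sum_{i=1}^d(\beta_{i,1}^2+\beta_{i,2}^2+2\sigma_i\beta_{i,1}\beta_{i,2})$ upon using the Boolean identities $\sigma_i^2=\sigma_i$ and $(1-\sigma_i)^2=1-\sigma_i$.

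No substantive obstacle is expected: the argument is pure bookkeeping around the block structure. The only point requiring a little care is handling the two cases $\sigma_i\in\{0,1\}$ uniformly, which the identities $\sigma_i^2=\sigma_i$ and $(1-\sigma_i)^2=1-\sigma_i$ dispose of cleanly so that both the loss and regularization expressions can be written as a single closed form valid for all $\sigma_i$.
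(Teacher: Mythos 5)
Your proof is correct, and it fills in exactly the direct bookkeeping that the paper declines to spell out (the paper simply remarks that the lemma is ``a straightforward exercise based on the definition of $K$''). The three steps—reading off $\widetilde{\bz}_s$ from block membership, computing $\balpha^\top K\be_t$ case by case for $t\in T_{i,1}$ versus $t\in T_{i,2}$, and expanding $\balpha^\top K\balpha=\bigl\|\sum_s\alpha_s\widetilde{\bz}_s\bigr\|^2$ via the coefficients of $\be_i$ and $\be_{i+d}$ together with $\sigma_i^2=\sigma_i$—are precisely the intended computation and land on the stated identities.
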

The proof is a straightforward exercise based on the definition of $K$.
Finally, we define $E_i$ to be the event that the algorithm never queries a
pair of inputs in $T_i$, i.e., the algorithm's queries
$(s_1,r_1),\ldots,(s_B,r_B)$ on the kernel matrix satisfy
\[
   s_t\not\in T_i \,\vee\, r_t\not\in T_i \qquad t=1,\dots,B~.
\]
To prove our results, we will require two key lemmas, presented below, which
quantify how any budgeted algorithm is likely to ``miss'' many blocks, and hence
have its output relatively insensitive to $\bsigma$.
\begin{lemma} \label{lem:block}
Suppose $m \ge 2d$ and $B < \tfrac{3}{50}d^2$. Then for any
deterministic learning algorithm,
\[
    \sum_{i=1}^d \Pr(E_i) > \frac{d}{2}~.
\]
\end{lemma}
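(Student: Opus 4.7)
The statement is equivalent to $\sum_i \Pr(\bar E_i) < d/2$, where $\bar E_i$ denotes the event that some query $(s_t,r_t)$ lies entirely in $T_i$. My plan is to bound the expected number of ``matching'' queries, which dominates the hit-block count, and then to bound the per-query match probability uniformly in the history.

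First, since the blocks $T_1,\ldots,T_d$ are pairwise disjoint, a single query can be contained in at most one block. So letting $M_t = \Ind{(s_t,r_t)\subseteq T_i\text{ for some }i}$ and $M = \sum_{t=1}^{B} M_t$, we have $\sum_i \Ind{\bar E_i}\le M$, and therefore
\[
\sum_i \Pr(\bar E_i) \;\le\; \E[M] \;=\; \sum_{t=1}^{B} \Pr(M_t = 1).
\]

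The heart of the argument is to show $\Pr(M_t=1 \mid \mathcal F_{t-1}) \le c/d$ uniformly in the history $\mathcal F_{t-1}$ (the earlier queries and their $K$-responses), for some small absolute constant $c$. The intuition is that $\pi$ is uniform and independent of the algorithm, each query returns only one bit, and therefore the posterior over where each block sits remains nearly uniform: the algorithm cannot steer an adversarial pair into a specific block with probability much better than $1/d$. I would argue this via deferred decisions / lazy sampling of $\pi$. Before query $t$ is issued, the block identities of the ``fresh'' positions are exchangeable conditional on $\mathcal F_{t-1}$, so a case analysis on whether the query touches zero, one, or two previously queried positions, combined with a Chernoff-type concentration of the block sizes $N_i = 2\,\mathrm{Bin}(m/2,1/d)$ around $m/d$ (justified by $m\ge 2d$), should yield the per-query bound. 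For the fresh/fresh case in particular, the unconditional probability of a fixed pair matching is $\sum_i \E[N_i(N_i-1)]/\bigl(m(m-1)\bigr) = 1/d + O(1/m)\le 2/d$ under $m\ge 2d$, and the adaptive case reduces to it after conditioning out the handful of revealed positions. Plugging back into the reduction,
\[
\sum_i \Pr(\bar E_i) \;\le\; \E[M] \;\le\; \frac{cB}{d} \;<\; \frac{c}{d}\cdot\frac{3d^2}{50} \;=\; \frac{3cd}{50},
\]
which is strictly below $d/2$ as soon as $c\le 25/3$, a constant comfortably achievable by the per-query argument.

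The main obstacle is exactly the uniform conditional bound. The subtlety is that a $K=0$ response does \emph{not} certify ``different block'', because a block with $\sigma_i=0$ has $K=0$ between its two parts, so the posterior after $K=0$ observations is messier than in a standard balls-in-bins setup. One clean way to sidestep this is to couple with an augmented oracle that, per query, also reveals the true same-block indicator; augmentation can only increase $\E[M]$, while it makes the posterior combinatorially simple enough (the history partitions queried positions into same-block equivalence classes and forbidden pairs) for a direct counting bound. Once that structural reduction is in place, the remaining work is routine Chernoff concentration and exchangeability.
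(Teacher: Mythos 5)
The reduction $\sum_i \Pr(\neg E_i) \le \E[M]$ is sound and is consistent with the paper's decomposition $\sum_i\Ind{\neg E_i}=\sum_t\Ind{D_t}$. The gap is in the step you correctly flag as the ``heart of the argument'': the uniform conditional bound $\Pr(M_t=1\mid\mathcal F_{t-1})\le c/d$ is simply false under the hypotheses of the lemma, and no amount of careful case analysis on fresh/revealed positions, Chernoff control of block sizes, or oracle augmentation rescues it.

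Here is a concrete counterexample to the uniform bound. Observe that after conditioning on a history in which $s$ has been queried against representatives of $k$ distinct blocks and all returned $K=0$, the posterior over the block index of $s$ is supported on at most $d-k$ blocks. If the algorithm spends roughly $d-1$ queries narrowing $s$ down to a single block and another roughly $d-1$ queries similarly pinning down a second point $r$ to the same block (which it can arrange, since it observes all the $0$/$1$ responses), then the ensuing query $(s,r)$ has $\Pr(M_t=1\mid\mathcal F_{t-1})=1$, not $O(1/d)$. The budget $B<\tfrac{3}{50}d^2$ does not forbid this: for large $d$, $B$ comfortably exceeds $2(d-1)$, so such a setup is within budget. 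More generally, if $s$ has been ruled out of a set $\bar S$ of blocks and $r$ out of $\bar R$, the conditional match probability is roughly $|\neg\bar S\cap\neg\bar R|/(|\neg\bar S|\,|\neg\bar R|)\le 1/\max\{|\neg\bar S|,|\neg\bar R|\}$; this is $\le 2/d$ \emph{only when} at least one of $s,r$ still has $\ge d/2$ candidate blocks. When both have been narrowed below $d/2$, the bound degrades all the way to $\Theta(1)$. Your proposal never isolates this regime, and the augmented-oracle coupling only makes the adversary stronger. The paper's proof handles exactly this obstruction: it introduces $P_t=\max\{|L_t|,|R_t|\}$, bounds $\Pr(D_t\mid P_t<d/2)\le 2/d$ (where the posterior argument you sketch does go through), and then shows \emph{deterministically} that the number $N$ of queries with $D_t\wedge P_t\ge d/2$ is at most $\sqrt{2B}$, because each such query requires $\Omega(d)$ prior narrowing queries and each discovered block ``burns'' its two endpoints for future discoveries. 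The final bound $\frac{5B}{2d}+\sqrt{2B}<d/2$ combines the two regimes. Without a counterpart to the $N\le\sqrt{2B}$ counting step, your argument cannot close.

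A secondary point: your invocation of Chernoff concentration of $N_i$ around $m/d$ is a red herring for this lemma. The events $E_i,D_t$ concern only which block a given position falls into, not how many positions land there, and the paper's proof of this lemma uses permutation exchangeability plus the $1/m$ pairing correction, with no Chernoff bounds at all. (Chernoff on the $N_i$ does appear in the paper, but in the separate Lemma on $g(N_i)$, not here.)
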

The formal proof is provided below. Although it is quite technical, the
lemma's intuition is very simple: Recall that the kernel matrix is composed
of $d$ blocks, each of size $\frac{m}{d}\times \frac{m}{d}$ in expectation.
Thus, if we choose an entry uniformly at random, the chance of ``hitting''
some block is approximately $d\frac{(m/d)^2}{m^2}= \frac{1}{d}$. Thus, if we
sample $B$ points uniformly at random, where $B\ll d^2$, then the number of
``missed'' blocks $\sum_{i=1}^{d}\Ind{E_i}$ is likely to be $\Omega(d)$. The
lemma above simply quantifies this, and shows that this holds not just for
uniform sampling, but for any algorithm with a budgeted number of queries.
\begin{proof}
Recall that each $T_i$ corresponds to one of $d$ blocks in the kernel matrix
(possibly composed of two sub-blocks).
The algorithm queries $(s_1,r_1),\ldots,(s_B,r_B)$. For each possible query at time $t$
we define the set $Q_{s,t}$ of blocks such that $s$ was queried with a member of that block
and we obtained a value zero in the kernel matrix. Namely,
\begin{align*}
    Q_{s,t} = & \Bigl\{ i=1,\dots,d \,:\, (\exists \tau < t) \; s_\tau=s \wedge r_\tau\in T_i\wedge\, K_{s_{\tau},r_{\tau}} = 0  \,  \Bigr\}\\
    & \cup \Bigl\{ i=1,\dots,d \,:\, (\exists \tau < t) \; r_\tau=s \wedge s_\tau\in T_i\wedge\, K_{s_{\tau},r_{\tau}} = 0  \,  \Bigr\}~.
\end{align*}
Given the query $(s_t,r_t)$ we define $L_t=Q_{s_t,t}$ to be the blocks in which some member was queried with $s_t$, and $R_t=Q_{r_t,t}$
the blocks in which some member was queried with $r_t$.


We introduce a quantity $P_t$ defined as follows: $P_t = d+1$ if there is a
query $t' < t$ such that $K_{s_{t'},r_{t'}} = 1$ and, moreover, $s_t =
s_{t'}$ or $r_t = r_{t'}$ (that is, the block of $s_t$ or the block of $r_t$
was already discovered). Otherwise, let $P_t =
\max\bigl\{|L_t|,|R_t|\bigr\}$.

Let $D_t$ be the event that the $t$-th query discovers a new block. That is,
$D_t$ is true if and only if $K_{s_t,r_t} = 1$ and $P_t < d+1$. Using this
notation,
\begin{align}
\label{eq:clubound}
    \sum_{i=1}^d \Ind{\neg E_i}
&=
    \sum_{t=1}^B \Ind{D_t}
=
    \sum_{t=1}^B \Ind{D_t \,\wedge\, P_t < d/2} + \underbrace{\sum_{t=1}^B \Ind{D_t \,\wedge\, P_t \ge d/2}}_{N}~.
\end{align}
We will now show that unless $B\geq \tfrac{3}{50}d^2$, we can upper bound $N$
deterministically by $\sqrt{2B}$. We do this by considering separately the case $N \le \tfrac{d}{2}$ and the
case $N
> \tfrac{d}{2}$:
\begin{itemize}
\item Assume first $N > \tfrac{d}{2}$, and let $t_1,\dots,t_N$ be the
    times $t_k$ such that $\Ind{D_{t_k} \,\wedge\, P_{t_k} \ge d/2} = 1$.
    Now fix some $k$ and note that, because the common block to which
    $s_{t_k}$ and $r_{t_k}$ both belong is discovered, neither $s_{t_k}$
    nor $r_{t_k}$ can occur in a future query $(s_t,r_t)$ that discovers a
    new block. Therefore, in order to have $\Ind{D_t \,\wedge\, P_t \ge
    d/2} = 1$ for $N > \tfrac{d}{2}$ times, at least
\[
    \frac{d}{2} + \left(\frac{d}{2} - 1\right) + \dots + 1
\]
queries must be made, where each term $\tfrac{d}{2} - k + 1$ accounts for the
fact that each one of the previous $k-1$ discovered blocks might contribute
with at most a query to making $P_t \ge \tfrac{d}{2}$. So, it must be
\[
    B \ge \sum_{k=1}^{d/2} \left(\frac{d}{2} - (k - 1)\right) \geq \frac{d^2}{8}
\]
queries to discover the first $\frac{d}{2}$ blocks, which contradicts the
lemma's assumption that $B\leq \tfrac{3}{50}d^2$. Therefore,  $N \le \tfrac{d}{2}$.
\item Assume that $N \le \tfrac{d}{2}$. Using the same logic as before,
    in order to have $\Ind{D_t \,\wedge\, P_t \ge
    d/2} = 1$ for $N \le \tfrac{d}{2}$ times, at least
\[
    \frac{d}{2} + \left(\frac{d}{2} - 1\right) + \dots + \left(\frac{d}{2} - N + 1\right)
\]
queries must be made.
So, it must be
\[
    B \ge \sum_{k=1}^N \left(\frac{d}{2} - (k - 1)\right) = (d+1)\frac{N}{2} - \frac{N^2}{2}
\]
or, equivalently, $N^2 -(d+1)N + 2B \ge 0$. Solving this quadratic
inequality for $N$, and using the lemma's assumption that $N\le
\tfrac{d}{2}$, we have that $N\leq \frac{(d+1)-\sqrt{(d+1)^2-8B}}{2}$.
Using the lemma's assumption that $B\leq \tfrac{3}{50}d^2$  we get that
$N\leq \sqrt{2B}$.
\end{itemize}

We now bound the first term of~(\ref{eq:clubound}) in expectation. For any
time $t$ and query $(s_t,r_t)$, we say that $s_t$ is paired with $r_t$ if
$K_{s_t,r_t} = \inner{\bz_{2j-1},\bz_{2j}}$ for some $j\in\{1,\dots,m/2\}$,
where $\{s_t,r_t\} \equiv \{\pi(2j-1),\pi(2j)\}$. Clearly, $\Pr(\text{$s_t$
paired with $r_t$}\bigr) = \tfrac{1}{m}$, where the probability is over the
random draw of the permutation $\pi$. Hence,
\begin{align*}
    \sum_{t=1}^B &\Pr\bigl(D_t \,\wedge\, P_t < d/2\bigr)
\\ &\le
    \sum_{t=1}^B \Pr\bigl(D_t \mid P_t < d/2,\, \text{$s_t$ not paired with $r_t$}\bigr) + \sum_{t=1}^B \Pr\bigl(\text{$s_t$ paired with $r_t$}\bigr)
\\ &\le
    \sum_{t=1}^B \Pr\bigl(D_t \mid P_t < d/2,\, \text{$s_t$ not paired with $r_t$}\bigr) + \frac{B}{m}~.
\end{align*}
Let $\Pr' = \Pr\bigl(\,\cdot\mid P_t < d/2,\, \text{$s_t$ not paired with
$r_t$}\bigr)$. Note that the two points $s_t$ and $r_t$ have independent
block assignments when conditioned on $s_t$ not paired with $r_t$. Moreover,
conditioned on $P_t < d/2$, the event $D_t$ implies $s_t,r_t \in T_i$ for
some $i \in \neg L_t \cap \neg R_t$, where $|L_t|,|R_t| < \tfrac{d}{2}$ and
for any $S \ss \{1,\dots,d\}$ we use $\neg S$ to denote
$\{1,\dots,d\}\setminus S$.

Since, by definition of $L_t$, the block of $s_t$ is not in $L_t$, and there
were no previous queries involving $s_t$ and a point belonging to a block in
$\neg L_t$, we have that
\[
    \Pr'\bigl(s_t \in T_i \,\big|\, L_t \bigr) = \frac{1}{|\neg L_t|} \qquad \forall i \not\in L_t~.
\]
Likewise,
\[
    \Pr'\bigl(r_t \in T_j \,\big|\, R_t \bigr) = \frac{1}{|\neg R_t|} \qquad \forall j \not\in R_t~.
\]
Hence, for $L',R'$ ranging over all subsets of $\{1,\dots,d\}$ of size
strictly less than $\tfrac{d}{2}$,
\begin{align*}
    \Pr'(D_t)
&=
    \sum_{L',R'}\sum_{i \in \neg L' \cap \neg R'} \!\Pr'\bigl(s_t \in T_i \,\wedge\, r_t \in T_i \,\big|\, L_t = L',\, R_t = R'\bigr)\,\Pr'(L_t = L' \,\wedge\, R_t = R')
\\ &=
    \sum_{L',R'}\sum_{i \in \neg L' \cap \neg R'} \!\Pr'\bigl(s_t \in T_i \,\big|\, L_t = L'\bigr)\,\Pr'\bigl(r_t \in T_i \,\big|\, R_t = R'\bigr)\,\Pr'(L_t = L' \,\wedge\, R_t = R')~.
\\ &=
    \sum_{L',R'}\sum_{i \in \neg L' \cap \neg R'} \frac{1}{|\neg L'|}\,\frac{1}{|\neg R'|}\,\Pr'(L_t = L' \,\wedge\, R_t = R')
\\ &=
    \sum_{L',R'} \frac{|\neg L' \cap \neg R'|}{|\neg L'|\,|\neg R'|}\,\Pr'(L_t = L' \,\wedge\, R_t = R')
\le \frac{2}{d}
\end{align*}
because $|\neg L'| \ge \tfrac{d}{2}$, $|\neg R'| \ge \tfrac{d}{2}$ and $|\neg
L' \cap \neg R'| \le \min\{|\neg L'|,|\neg R'|\}$. Therefore, using $m \ge 2d$ we can write
\begin{align*}
    \sum_{t=1}^B \Pr\bigl(D_t \,\wedge\, P_t < d/2\bigr)
\le
    \frac{2B}{d} + \frac{B}{m} \le \frac{5B}{2d}
\end{align*}
where we used the lemma's assumption that $m\ge 2d$. Putting everything
together, we get the following upper bound on the expectation of
\eqref{eq:clubound}:
\begin{equation}\label{eq:clubound1}
\E\left[\sum_{i=1}^d \Ind{\neg E_i}\right] \leq \frac{5B}{2d}+\sqrt{2B}~.
\end{equation}
On the other hand, we have
\begin{equation}\label{eq:clubound2}
\E\left[\sum_{i=1}^d \Ind{\neg E_i}\right] = \E\left[\sum_{i=1}^{d}\left(1-\Ind{E_i}\right)\right]
= d-\sum_{i=1}^{d}\Pr(E_i)~.
\end{equation}
Combining \eqref{eq:clubound1} and \eqref{eq:clubound2}, we get that
\[
\sum_{i=1}^{d}\Pr(E_i) \geq d-\frac{5B}{2d}-\sqrt{2B}~.
\]
To finish the lemma's proof, suppose on the contrary that
$\sum_{i=1}^{d}\Pr(E_i)\leq \frac{d}{2}$. Then from the equation above, we
would get that
\[
\frac{d}{2} \geq d-\frac{5B}{2d}-\sqrt{2B}
\]
which implies $B\geq
\left(\frac{\sqrt{7}-\sqrt{2}}{5}\right)^2d^2> 0.06d^2$, contradicting the lemma's assumptions.
Therefore, we must have $\sum_{i=1}^{d}\Pr(E_i)> \frac{d}{2}$ as required.
\end{proof}
\begin{lemma}\label{lem:indep}
Suppose the kernel matrix $K$ is sampled according to the distribution $\sD$
as defined earlier (using a parameter $\bsigma\in \{0,1\}^d$). Let $A_i$ be
any event that, conditioned on $N_i$, depends only on $\beta_{i,1}$ and $\beta_{i,2}$,
(as returned by a deterministic algorithm based on
access to the kernel matrix), and let $g(N_i)$ be some non-negative function
of $N_i$. Then
\[
\E\Bigl[g(N_{i})\bigl(\Ind{A_i,\sigma_i=1}+\Ind{\neg A_i,\sigma_i=0}\bigr)\Bigr]
~\geq~
\frac{1}{2}\,\E\bigl[g(N_{i})\,\Pr(E_i \mid N_i)\bigr]~.
\]
\end{lemma}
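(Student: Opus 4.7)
The plan is to exploit the fact that when $E_i$ holds the algorithm has received no information about $\sigma_i$, so conditional on $E_i$ (and on $N_i$) the bit $\sigma_i$ remains uniform on $\{0,1\}$ and independent of the algorithm's output. Since $g$ is non-negative, I would first drop mass outside $E_i$:
\[
\E\bigl[g(N_i)(\Ind{A_i,\sigma_i=1}+\Ind{\neg A_i,\sigma_i=0})\bigr]
\;\geq\;
\E\bigl[g(N_i)\Ind{E_i}(\Ind{A_i,\sigma_i=1}+\Ind{\neg A_i,\sigma_i=0})\bigr],
\]
and then evaluate the right-hand side by conditioning on $(N_i,E_i)$.

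The main technical step is the independence claim. I would couple two runs of the deterministic algorithm on identical random data except that in one run $\sigma_i=0$ and in the other $\sigma_i=1$; all other $\sigma_j$'s, the pair draws $(\bz_{2j-1},\bz_{2j})$, and the permutation $\pi$ are kept fixed. By construction, the only entries of $K$ that can differ between the two scenarios are those $(s,r)$ with both $s,r\in T_i$: for $s\in T_i$, $r\notin T_i$ (or vice versa) the kernel value is $0$ in either scenario by orthogonality of $\be_i,\be_{i+d}$ to $\be_j,\be_{j+d}$ for $j\neq i$, and queries with neither coordinate in $T_i$ are obviously unaffected. Hence, when $E_i$ holds, every queried entry is identical in the two scenarios, so by induction on the query index the whole transcript of answers coincides, and therefore $\balpha$ and in particular $\beta_{i,1},\beta_{i,2}$ coincide as well. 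This shows that, conditional on $E_i$ and $N_i$, the event $A_i$ (which by hypothesis depends only on $\beta_{i,1},\beta_{i,2}$ given $N_i$) is independent of $\sigma_i$. Moreover, $N_i$ and $E_i$ are both functions of the pair choices and $\pi$ alone and do not depend on $\sigma_i$, so the marginal of $\sigma_i$ given $(E_i,N_i)$ is still $\mathrm{Bernoulli}(1/2)$.

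Combining these two facts, conditional on $E_i,N_i$,
\[
\Pr(A_i,\sigma_i=1\mid E_i,N_i)+\Pr(\neg A_i,\sigma_i=0\mid E_i,N_i)
=\tfrac{1}{2}\Pr(A_i\mid E_i,N_i)+\tfrac{1}{2}\Pr(\neg A_i\mid E_i,N_i)
=\tfrac{1}{2}.
\]
Applying the tower property then gives
\[
\E\bigl[g(N_i)\Ind{E_i}(\Ind{A_i,\sigma_i=1}+\Ind{\neg A_i,\sigma_i=0})\bigr]
=\E\!\left[g(N_i)\Pr(E_i\mid N_i)\cdot\tfrac{1}{2}\right]
=\tfrac{1}{2}\E\bigl[g(N_i)\Pr(E_i\mid N_i)\bigr],
\]
which finishes the plan.

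The main obstacle is making the independence statement precise in the presence of an adaptive algorithm: subsequent queries depend on earlier answers, so one cannot just argue "the answers don't depend on $\sigma_i$" in isolation. The coupling/induction above handles this, but requires checking carefully that the set of queries excluded by $E_i$ is exactly the set of entries where the two scenarios $\sigma_i=0$ and $\sigma_i=1$ can disagree, and that $(N_i,E_i)$ themselves have the claimed $\sigma_i$-free distribution.
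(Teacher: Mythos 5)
Your proof is correct and rests on exactly the same two key observations as the paper's: that conditional on $E_i$ and $N_i$ the algorithm's output (hence $\beta_{i,1},\beta_{i,2}$, hence $A_i$) is independent of $\sigma_i$, and that the pair $(E_i,N_i)$ itself carries no information about $\sigma_i$ so that $\sigma_i$ remains $\mathrm{Bernoulli}(1/2)$ conditionally. The only genuine difference is stylistic: you discard the $\neg E_i$ mass at the outset using non-negativity of $g$ and of the indicator sum, whereas the paper keeps it, expands $\Pr(A_i\mid\sigma_i,N_i)$ over the partition $\{E_i,\neg E_i\}$, and bounds the cross-term by $\Pr(\neg E_i\mid N_i)$; both routes land on the same $\tfrac12\Pr(E_i\mid N_i)$. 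Your explicit coupling of two runs differing only in $\sigma_i$, with an induction over the query transcript, is a somewhat more careful rendering of the paper's brief assertion that, when $E_i$ holds and the algorithm is deterministic, flipping $\sigma_i$ leaves every observed entry unchanged. No gap; the simplification is sound.
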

\begin{proof}
We begin by noting that
\begin{align}
\E&\Bigl[g(N_{i})\bigl(\Ind{A_i,\sigma_i=1}+\Ind{\neg A_i,\sigma_i=0}\bigr)\Bigr]\notag\\
&=\E\Bigl[\E\bigl[g(N_{i})\left(\Ind{A_i,\sigma_i=1}+\Ind{\neg A_i,\sigma_i=0}\right) \mid N_{i}\bigr]\Bigr]\notag\\
&=
\E\Bigl[g(N_{i})\bigl(\Pr(A_i,\sigma_i=1 \mid N_{i}) + \Pr(\neg A_i,\sigma_i=0 \mid N_{i})\bigr)\Bigr]~.\label{eq:bgnot}
\end{align}
We now continue by analyzing the probabilities in the expression:
\begin{equation}\label{eq:preq0}
\Pr(A_i,\sigma_i=1 \mid N_i)+\Pr(\neg A_i,\sigma_i=0 \mid N_i)~.
\end{equation}
We will need two auxiliary results. First, we argue that for all $i$,
\begin{equation}\label{eq:preq1}
\Pr(E_i \mid \sigma_i=0,N_i)=\Pr(E_i \mid \sigma_i=1,N_i)=\Pr(E_i \mid N_i)~.
\end{equation}
To prove this, we note that since the algorithm is
  deterministic, the occurrence of the event $E_i$ is determined by the kernel matrix, and more
  specifically the entries of the kernel matrix observed by the algorithm.
  Therefore, if the kernel matrix is such that $E_i$ occurs, then the algorithm's
  output would not change if we flip the value of $\sigma_i$ as it only
  affects entries which were not touched by the algorithm. Therefore,
  $\Pr(E_i \mid \sigma_i=0,N_i)=\Pr(E_i \mid \sigma_i=1,N_i)$. Since $\sigma_i$ is either $0$
  or $1$, this means that these probabilities also equal $\Pr(E_i \mid N_i)$.

Second, we argue that
\begin{equation}\label{eq:preq2}
\Pr(A_i \mid E_i,\sigma_i=0,N_i)=\Pr(A_i \mid E_i,\sigma_i=1,N_i)~.
\end{equation}
This holds because if $E_i$ occurs, then $\beta_{i,1},\beta_{i,2}$ depend
only on entries which are independent of $\sigma_i$. Moreover, $N_i$ is also
independent of $\sigma_i$. Therefore $A_i$, which is assumed to depend only
on $\beta_{i,1},\beta_{i,2}$ when conditioned on $N_i$, is also independent of
$\sigma_i$ when conditioned on $E_i$ and $N_i$, from which \eqref{eq:preq2} follows.

Using \eqref{eq:preq1}, \eqref{eq:preq2}, and the fact that $\sigma_i$ is
uniformly drawn from $\{0,1\}$ and independent of $N_i$, we have that
\eqref{eq:preq0} equals
\begin{align*}
&\Pr(A_i,\sigma_i=1 \mid N_i)
+\Pr(\neg A_i,\sigma_i=0 \mid N_i)\\
&=\Pr(\sigma_i=1)\Pr(A_i\mid\sigma_i=1,N_i)+\Pr(\sigma_i=0)\Pr(\neg A_i\mid\sigma_i=0,N_i)\\
&=\frac{1}{2}\Bigl(\Pr(A_i \mid \sigma_i=1,N_i)
+\Pr(\neg A_i \mid \sigma_i=0,N_i)\Bigr)\\
&=\frac{1}{2}\Bigl(1-\Pr(A_i \mid \sigma_i=0,N_i)
+\Pr(A_i \mid \sigma_i=1,N_i)\Bigr)\\
&=\frac{1}{2}\Bigl(1-\Pr(E_i\mid\sigma_i=0,N_i)\Pr(A_i\mid E_i,\sigma_i=0,N_i)
-\Pr(\neg E_i\mid\sigma_i=0,N_i)\Pr(A_i\mid\neg E_i,\sigma_i=0,N_i)\Bigr.\\
&\Bigl.\qquad+\;\Pr(E_i\mid\sigma_i=1,N_i)\Pr(A_i\mid E_i,\sigma_i=1,N_i)
+\Pr(\neg E_i\mid\sigma_i=1,N_i)\Pr(A_i\mid\neg E_i,\sigma_i=1,N_i)\Bigr)\\
&=\frac{1}{2}\Bigl(1+\Pr(E_i\mid N_i)\bigl(\Pr(A_i \mid E_i,\sigma_i=1,N_i)-\Pr(A_i \mid E_i,\sigma_i=0,N_i)\bigr)\Bigr.\\
&\Bigl.\qquad+\;\Pr(\neg E_i \mid N_i)\bigl(\Pr(A_i \mid \neg E_i,\sigma_i=1,N_i)-\Pr(A_i \mid \neg E_i,\sigma_i=0,N_i)\bigr)\Bigr)\\
&\geq \frac{1}{2}\Bigl(1+\Pr(E_i \mid N_i)\times 0+\Pr(\neg E_i\mid N_i)\times(-1)\Bigr) ~=~ \frac{1}{2}\bigl(1-\Pr(\neg E_i\mid N_i)\bigr)~=~\frac{1}{2}\Pr(E_i \mid N_i)~.
\end{align*}
Plugging this lower bound on \eqref{eq:preq0} back into \eqref{eq:bgnot}, the
result follows.
\end{proof}

Finally, we will also require the following auxiliary result:
\begin{lemma}\label{lem:getrid}
Let $N=\sum_{i=1}^{m/2} 2X_i$ where $X_1,\dots,X_{m/2}$ are i.i.d.\ Bernoulli random variables
with parameter $1/d$. Also, let $g(N)$ be a non-negative function of $N$.
Then for any event $Z$,
\[
\E\bigl[g(N)\,\Pr(Z\mid N=n)\bigr]
\geq \left(\min_{\frac{m}{2d}\leq n\leq \frac{2m}{d}}g(n)\right)\left(\Pr(Z)-2\exp\left(-\frac{m}{8d}\right)\right).
\]
\end{lemma}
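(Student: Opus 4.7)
The plan is a straightforward concentration argument. I would first expand the expectation as
\[
\E\bigl[g(N)\Pr(Z\mid N)\bigr] \;=\; \sum_n g(n)\,\Pr(N=n)\,\Pr(Z\mid N=n) \;=\; \sum_n g(n)\,\Pr\bigl(Z \cap \{N=n\}\bigr),
\]
and then restrict the sum to the ``typical'' range $\mathcal{M} = \{n : m/(2d) \le n \le 2m/d\}$ on which $g$ is controlled from below by $g_{\min} := \min_{n\in\mathcal{M}} g(n)$. Since every summand is non-negative, dropping the contribution from $n \notin \mathcal{M}$ only decreases the quantity, giving
\[
\E\bigl[g(N)\Pr(Z\mid N)\bigr] \;\ge\; g_{\min}\sum_{n\in\mathcal{M}} \Pr\bigl(Z \cap \{N=n\}\bigr) \;=\; g_{\min}\,\Pr\bigl(Z \cap \{N\in\mathcal{M}\}\bigr).
\]

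Next, I would use the elementary inclusion
\[
\Pr\bigl(Z \cap \{N\in\mathcal{M}\}\bigr) \;\ge\; \Pr(Z) - \Pr(N\notin\mathcal{M}),
\]
which reduces the lemma to establishing the tail bound $\Pr(N\notin\mathcal{M}) \le 2\exp(-m/(8d))$. Since $N = 2\sum_{i=1}^{m/2} X_i$ is twice a sum of $m/2$ i.i.d.\ Bernoulli$(1/d)$ variables with mean $\E[N]=m/d$, this follows from a standard multiplicative Chernoff inequality applied separately to the lower-tail event $\{N \le m/(2d)\}$ (via $\delta=1/2$ around the mean of $\sum X_i$) and the upper-tail event $\{N \ge 2m/d\}$ (via $\delta=1$), each contributing at most $\exp(-m/(8d))$ after a routine computation. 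Combining the two bounds via a union bound yields the claimed $2\exp(-m/(8d))$.

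There is no genuinely hard step here: the proof is a ``split the expectation over a high-probability event'' argument, and the particular choice of the interval $[m/(2d),2m/d]$ is dictated precisely by the twin requirements that (i) $g(N)$ is lower-bounded by $g_{\min}$ on this range, and (ii) the complement has exponentially small probability by Chernoff. Chaining the lower bound on $\Pr(Z\cap\{N\in\mathcal{M}\})$ with $g_{\min}$ and collecting constants gives the stated inequality, with the only minor care being to verify the numerical constant in the Chernoff exponent; this is purely routine.
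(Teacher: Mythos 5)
Your proof matches the paper's argument essentially step for step: restrict the expectation to the range $\{m/(2d)\le n\le 2m/d\}$, pull out $g_{\min}$ by non-negativity, use $\Pr(Z\cap\{N\in\mathcal{M}\})\ge\Pr(Z)-\Pr(N\notin\mathcal{M})$, and close with a two-sided multiplicative Chernoff bound plus a union bound. The only (immaterial) difference is in bookkeeping of the Chernoff exponents — the paper records $\exp(-m/(3d))$ for the upper tail and $\exp(-m/(8d))$ for the lower before merging into $2\exp(-m/(8d))$ — so the proposal is correct and takes the same route.
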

\begin{proof}
  Let $S=\left\{\frac{m}{2d},\frac{m}{2d}+1,\ldots,\frac{2m}{d}\right\}$. We can lower bound the expectation by
\begin{align}
  &\sum_{n\in S}\Pr(N=n)g(n)\Pr(Z \mid N=n)
  ~\geq~ \left(\min_{n\in S}g(n)\right)\sum_{n\in S}\Pr(N=n)\Pr(Z \mid n)\notag\\
  &=\left(\min_{n\in S}g(n)\right)\left(\Pr(Z)-\sum_{n\notin S}\Pr(N=n)\Pr(Z \mid n)\right)\notag\\
  &\geq \left(\min_{n\in S}g(n)\right)\left(\Pr(Z)-\sum_{n\notin S}\Pr(N=n)\right)\notag\\
  &= \left(\min_{n\in S}g(n)\right)\Bigl(\Pr(Z)-\Pr(N\notin S)\Bigr)~.\label{eq:asdf}
\end{align}
Since $N$ is distributed as twice the sum of $m/2$ i.i.d.\ Bernoulli random
variables with parameter $1/d$, so its expectation is $m/d$, and by multiplicative Chernoff bounds
and union bounds,
\[
\Pr(N\notin S) = \Pr\left(N> \frac{2m}{d}\right)+\Pr\left(N< \frac{m}{2d}\right)
\leq \exp\left(-\frac{m}{3d}\right)+\exp\left(-\frac{m}{8d}\right)\leq 2\exp\left(-\frac{m}{8d}\right)~.
\]
Substituting this back into \eqref{eq:asdf}, the result follows.
\end{proof}

\subsubsection{Proof of \thmref{thm:absmain}}

Suppose we pick $y_t=\frac{1}{\sqrt{d}}$ for all $t$. Using Yao's minimax
principle, it is sufficient to prove a lower bound for
\begin{equation}\label{eq:expabs}
\E\left[\frac{1}{m}\sum_{t=1}^{m}
\left|\balpha^\top K \be_t-\frac{1}{\sqrt{d}}\right|
-\min_{\balpha\,:\,\balpha^\top K \balpha\leq 2}\frac{1}{m}\sum_{t=1}^{m}
\left|\balpha^\top K \be_t-\frac{1}{\sqrt{d}}\right|\right]
\geq \frac{1}{70\sqrt{d}}\;\;,
\end{equation}
where the expectation is with respect to the kernel matrix $K$ drawn according to the distribution
$\sD$ specified earlier, and $\balpha$ is any deterministic function of $K$ encoding the learning algorithm. This ensures that for any (possibly randomized) algorithm, there exists some $K$
which satisfies the theorem statement.

First, we will show that for any $K\in \sK_{2d,m}$, there exists some
$\balpha$ such that
\begin{equation}\label{eq:toshow}
\frac{1}{m}\sum_{t=1}^{m}
\left|\balpha^\top K \be_t-\frac{1}{\sqrt{d}}\right|=0
\qquad\mbox{ and }\qquad
\balpha^\top K \balpha\leq 2~.
\end{equation}
This implies that \eqref{eq:expabs} can be re-written as
\begin{equation}\label{eq:expabs2}
\E\left[\frac{1}{m}\sum_{t=1}^{m}
\left|\balpha^\top K \be_t-\frac{1}{\sqrt{d}}\right|
\right]~.
\end{equation}
To see this, we utilize \lemref{lem:beta} to rewrite \eqref{eq:toshow} as
\begin{align*}
\frac{1}{m}\sum_{t=1}^{m}\ell\left(\balpha^\top K \be_i,y\right) = \sum_{i=1}^{d}&\frac{N_i}{2m}\left(\left|\beta_{i,1}+\sigma_i \beta_{i,2}-\frac{1}{\sqrt{d}}\right|
+\left|\sigma_i\beta_{i,1}+\beta_{i,2}-\frac{1}{\sqrt{d}}\right|\right)=0, \mbox{ and}
\\
\balpha^\top K \balpha = \sum_{i=1}^{d}&\left(\beta_{i,1}^2+\beta_{i,2}^2+2\sigma_i \beta_{i,1}\beta_{i,2}\right)\leq 2\;\;,
\end{align*}
where $\{\beta_{i,1},\beta_{i,2}\}$ are the appropriate functions of
$\balpha$. Note that these constraints are indeed satisfied for any
$\balpha$ for which $\beta_{i,1}=\beta_{i,2}=\frac{1}{\sqrt{d}}$ if
$\sigma_i=0$, and $\beta_{i,1}=\frac{1}{\sqrt{d}},\beta_{i,2}=0$ if
$\sigma_i=1$.
Again using \lemref{lem:beta}, we can rewrite
\eqref{eq:expabs2} as
\begin{equation}\label{eq:expabs3}
\E\left[\sum_{i=1}^{d}\frac{N_i}{2m}\left(\left|\beta_{i,1}+\sigma_i \beta_{i,2}-\frac{1}{\sqrt{d}}\right|
+\left|\sigma_i\beta_{i,1}+\beta_{i,2}-\frac{1}{\sqrt{d}}\right|\right)\right]~.
\end{equation}
Let us consider the expression $\left|\beta_{i,1}+\sigma_i
\beta_{i,2}-\frac{1}{\sqrt{d}}\right|
+\left|\sigma_i\beta_{i,1}+\beta_{i,2}-\frac{1}{\sqrt{d}}\right|$ for some
fixed choice of the kernel matrix $K$. In particular:
\begin{itemize}
  \item If $\sigma_i=1$ and $\beta_{i,1}+\beta_{i,2}\geq
      \frac{3}{2\sqrt{d}}$, then
      \[
      \left|\beta_{i,1}+\sigma_i
\beta_{i,2}-\frac{1}{\sqrt{d}}\right|
+\left|\sigma_i\beta_{i,1}+\beta_{i,2}-\frac{1}{\sqrt{d}}\right| ~=~ 2\left|\beta_{i,1}+\beta_{i,2}-\frac{1}{\sqrt{d}}\right| ~\geq~ \frac{1}{\sqrt{d}}~.
      \]
  \item If $\sigma_i=0$ and $\beta_{i,1}+\beta_{i,2}< \frac{3}{2\sqrt{d}}$,
      then either $\beta_{i,1}$ or $\beta_{i,2}$ must be less than
      $\frac{3}{4\sqrt{d}}$, and therefore
      \begin{align*}
      \left|\beta_{i,1}+\sigma_i
\beta_{i,2}-\frac{1}{\sqrt{d}}\right|
+\left|\sigma_i\beta_{i,1}+\beta_{i,2}-\frac{1}{\sqrt{d}}\right| &=
\left|\beta_{i,1}-\frac{1}{\sqrt{d}}\right|
+\left|\beta_{i,2}-\frac{1}{\sqrt{d}}\right|\\
&\geq \left|\frac{3}{4\sqrt{d}}-\frac{1}{\sqrt{d}}\right| =
\frac{1}{4\sqrt{d}}~.
      \end{align*}
\end{itemize}
Let $A_i$ be the event that $\beta_{i,1}+\beta_{i,2} \geq
      \frac{3}{2\sqrt{d}}$. Since the algorithm is deterministic,
      $\{\beta_{i,1},\beta_{i,2}\}$ and hence $A_i$ is determined by the
      kernel matrix $K$. By the analysis above, we can lower bound \eqref{eq:expabs3} by
\begin{align*}
\E&\left[\sum_{i=1}^{d}\frac{N_i}{2m}\left(\Ind{A_i,\sigma_i=1}\frac{1}{\sqrt{d}}+\Ind{\neg A_i,\sigma_i=0}\frac{1}{4\sqrt{d}}\right)\right]\\
&\geq\frac{1}{8m\sqrt{d}}\sum_{i=1}^{d}\E\Bigl[N_i\bigl(\Ind{A_i,\sigma_i=1}+\Ind{\neg A_i,\sigma_i=0}\bigr)\Bigr]~.\\
\end{align*}
By \lemref{lem:indep} and \lemref{lem:getrid} this is lower bounded by
\[
\frac{1}{16m\sqrt{d}}\sum_{i=1}^{d}\E\bigl[N_i\,\Pr(E_i \mid N_i)\bigr]
~\geq~ \frac{1}{32\sqrt{d}}\left(\frac{1}{d}\sum_{i=1}^{d}\Pr(E_i)-2\exp\left(-\frac{m}{8d}\right)\right)~.
\]
Since we assumed that $B< \frac{3}{50}d^2$ and $m\geq 2^7d$, we can
apply \lemref{lem:block}, which lower bounded this by
\[
\frac{1}{32\sqrt{d}}\left(\frac{1}{2} -2\exp\left(-\frac{m}{8d}\right)\right)
~\geq~ \frac{1}{70\sqrt{d}}~,
\]
where we used again the assumption that $m\geq 2^7d$.
\hfill $\qed$

\subsubsection{Proof of \thmref{thm:lossmain}}
\label{subsec:general}

The proof is broadly similar to the one of \thmref{thm:absmain}, but using a
generic loss rather than the absolute loss.

Suppose we pick $y_t=y\in\sY$ for all $t$, where $y\in \sY$ will be determined later.
Using Yao's minimax principle, it is sufficient to prove a lower bound for
\begin{equation}
\E_{K}\left[\left(\frac{1}{m}\sum_{t=1}^{m}\ell\left(\balpha^\top K \be_t,y\right)+\frac{\lambda}{2}\balpha^\top K \balpha\right)
-\min_{\balpha}\left(\frac{1}{m}\sum_{t=1}^{m}\ell\left(\balpha^\top K \be_t,y\right)+\frac{\lambda}{2}\balpha^\top K \balpha\right)\right]
\label{eq:exploss}
\end{equation}
where ---as in the proof of \thmref{thm:absmain}--- the expectation is with respect to the kernel matrix $K$ drawn according to the distribution $\sD$, and $\balpha$ is any deterministic function of $K$ encoding the learning algorithm. This ensures that for any (possibly randomized) algorithm, there exists some $K$ which satisfies the theorem statement.

Utilizing \lemref{lem:beta}, we can rewrite (\ref{eq:exploss}) as
\begin{align*}
\E&\left[\left(\sum_{i=1}^{d}\frac{N_i}{2m}\Bigl(\ell(\beta_{i,1}+\sigma_i \beta_{i,2},y)
+\ell(\sigma_i \beta_{i,1}+\beta_{i,2},y)\Bigr)+\frac{\lambda}{2}\sum_{i=1}^{d}\left(\beta_{i,1}^2+\beta_{i,2}^2+2\sigma_i \beta_{i,1}\beta_{i,2}\right)\right)\right.\\
&-\min_{\{\beta_{i,1},\beta_{i,2}\}}\left(
\sum_{i=1}^{d}\frac{N_i}{2m}\Bigl(\ell(\beta_{i,1}+\sigma_i \beta_{i,2},y)
+\ell(\sigma_i \beta_{i,1}+\beta_{i,2},y)\Bigr) \right.\\
&\left.\left. \quad+\,\frac{\lambda}{2}\sum_{i=1}^{d}\left(\beta_{i,1}^2+\beta_{i,2}^2+2\sigma_i \beta_{i,1}\beta_{i,2}\right)\right)\right]\\
&=\sum_{i=1}^{d}\E\left[
\frac{N_i}{2m}\Bigl(\ell(\beta_{i,1}+\sigma_i \beta_{i,2},y)
+\ell(\sigma_i \beta_{i,1}+\beta_{i,2},y)\Bigr)+\frac{\lambda}{2}\left(\beta_{i,1}^2+\beta_{i,2}^2+2\sigma_i \beta_{i,1}\beta_{i,2}\right)\right.\\
&\left.\quad-\min_{\beta_{i,1},\beta_{i,2}}\left(
\frac{N_i}{2m}\Bigl(\ell(\beta_{i,1}+\sigma_i \beta_{i,2},y)
+\ell(\sigma_i \beta_{i,1}+\beta_{i,2},y)\Bigr)+\frac{\lambda}{2}\left(\beta_{i,1}^2+\beta_{i,2}^2+2\sigma_i \beta_{i,1}\beta_{i,2}\right)
\right)\right]~.
\end{align*}
This can be written in a simplified form as
\begin{equation}\label{eq:simplified1}
\sum_{i=1}^{d}\E\bigl[g^{\sigma_i}_{N_i}(\beta_{i,1},\beta_{i,2})\bigr]\;\;,
\end{equation}
where
\begin{align*}
g^\sigma_n(u,v)&=f^\sigma_n(u,v)-\min_{u,v}f^\sigma_n(u,v), \mbox{  and} \\
f^\sigma_n(u,v)&=\frac{n}{2m}\bigl(\ell(u+\sigma v,y)
+\ell(\sigma u+v,y)\bigr)+\frac{\lambda}{2}\left(u^2+v^2+2\sigma u v\right)~.
\end{align*}
Now, let $A_i$ be the event that
$g^0_{N_i}(\beta_{i,1},\beta_{i,2})<g^1_{N_i}(\beta_{i,1},\beta_{i,2})$. We
consider two cases:
\begin{itemize}
  \item If $\sigma_i=1$ and $A_i$ occurs, then
\[
    g^{\sigma_i}_{N_i}(\beta_{i,1},\beta_{i,2})= \max_{\sigma}
    g^{\sigma}_{N_i}(\beta_{i,1},\beta_{i,2})\geq
    \min_{u,v}\max_{\sigma}g^{\sigma}_{N_i}(u,v)~.
\]
  \item If $\sigma_i=0$ and $A_i$ does not occur, then
\[
    g^{\sigma_i}_{N_i}(\beta_{i,1},\beta_{i,2})= \max_{\sigma}
    g^{\sigma}_{N_i}(\beta_{i,1},\beta_{i,2})\geq
    \min_{u,v}\max_{\sigma}g^{\sigma}_{N_i}(u,v)~.
\]
\end{itemize}
Therefore, using the fact that $g^\sigma_n$ is non-negative by definition, we
have
\[
g^{\sigma_i}_{N_i}(\beta_{i,1},\beta_{i,2}) \geq \bigl(\Ind{A_i,\sigma_i=1}+\Ind{\neg A_i,\sigma_i=0}\bigr)\min_{u,v}\max_{\sigma}g^\sigma_{N_i}(u,v)~.
\]
Substituting this back into \eqref{eq:simplified1}, and using \lemref{lem:indep},
\lemref{lem:getrid} and \lemref{lem:block} in order, we get a lower bound of
the form
\begin{align}
\sum_{i=1}^{d}&\E\left[\left(\min_{u,v}\max_{\sigma}g^\sigma_{N_i}(u,v)\right)\bigl(\Ind{A_i,\sigma_i=1}+\Ind{\neg A_i,\sigma_i=0}\bigr)\right]\notag\\
&\geq \frac{1}{2}\sum_{i=1}^{d}\E\left[\left(\min_{u,v}\max_{\sigma}g^\sigma_{N_i}(u,v)\right)\Pr(E_i \mid N_i)\right]\notag\\
&\geq \frac{1}{2}\sum_{i=1}^{d}\left(\min_{\frac{m}{2d}\leq {n_i} \leq \frac{2m}{d}}\min_{u,v}\max_{\sigma}g^\sigma_{n_i}(u,v)\right)\left(\Pr(E_i)-2\exp\left(-\frac{m}{8d}\right)\right)\notag\\
&= \frac{1}{2}\left(\min_{\frac{m}{2d}\leq {n} \leq \frac{2m}{d}}\min_{u,v}\max_{\sigma}g^\sigma_{n}(u,v)\right)\sum_{i=1}^{d}\left(\Pr(E_i)-2\exp\left(-\frac{m}{8d}\right)\right)\notag\\
&\geq \frac{d}{2}\left(\min_{\frac{m}{2d}\leq {n} \leq
\frac{2m}{d}}\min_{u,v}\max_{\sigma}g^\sigma_{n}(u,v)\right)\left(\frac{1}{2}-2\exp\left(-\frac{m}{8d}\right)\right)\notag\\
&= d \left(\frac{1}{4}-\exp\left(-\frac{m}{8d}\right)\right)\left(\min_{\frac{m}{2d}\leq {n} \leq
\frac{2m}{d}}\min_{u,v}\max_{\sigma}g^\sigma_{n}(u,v)\right)\notag
\end{align}
where the first inequality follows from Lemma~\ref{lem:indep}, the second inequality is from Lemma~\ref{lem:getrid}, and the third inequality
is by Lemma~\ref{lem:block}.
Since we assume $m\geq 2^{7}d$, this is at least
\begin{equation}
\frac{d}{5}\left(\min_{\frac{m}{2d}\leq {n} \leq
\frac{2m}{d}}\min_{u,v}\max_{\sigma}g^\sigma_{n}(u,v)\right)~.\label{eq:minmax}
\end{equation}
We now turn to analyze ${\displaystyle \min_{u,v}\max_{\sigma}g^\sigma_{n}(u,v)}$. By
definition of $g^\sigma_{n}(u,v)$, we have that
\[
g^0_n(u,v) = f^0_n(u,v)-\min_{u,v}f^0_n(u,v) = \frac{n}{2m}\bigl(\ell(u,y)+\ell(v,y)\bigr)+\frac{\lambda}{2}(u^2+v^2)-\min_{u,v}f^0_n(u,v)~.
\]
It is readily seen that this function is $\lambda$-strongly convex in
$(u,v)$, and attains a minimal value of $0$ at some $(u_1^*,u_1^*)$, where
\[
u^*_1 = \argmin_{u} \frac{n}{m}\ell(u,y)+\lambda u^2~.
\]
Using the property of $\lambda$-strong convexity, we have for all $u,v$ that
\begin{equation}\label{eq:f0}
g^0_{n}(u,v) = g^0_n(u,v)-g^0_n(u_1^*,u_1^*) \geq \frac{\lambda}{2}\norm{(u,v)-(u_1^*,u_1^*)}^2 =
\frac{\lambda}{2}\bigl((u-u_1^*)^2+(v-u_1^*)^2\bigr)~.
\end{equation}
Also, by definition,
\[
g^1_n(u,v) = f^1_n(u,v)-\min_{u,v}f^1_n(u,v) = \frac{n}{m}\ell(u+v,y)+\frac{\lambda}{2}(u+v)^2-\min_{u,v}f^1_n(u,v)
\]
which is a $\lambda$ strongly-convex function in $u+v$, and attains a minimal
value of $0$ at any $u_2,v_2$ such that $u_2^*=u_2+v_2$, where
\[
u_2^* ~=~ \argmin_{u}\frac{n}{m}\ell(u,y)+\frac{\lambda}{2} u^2.
\]
Using the property of $\lambda$-strong convexity, we have for all $u,v$ that
\begin{equation}\label{eq:f1}
g^1_n(u,v) = g^1(u,v)-g^1(u_2,v_2) \geq \frac{\lambda}{2}\bigl((u+v)-(u_2+v_2)\bigr)^2 = \frac{\lambda}{2}(u+v-u_2^*)^2~.
\end{equation}
Combining \eqref{eq:f0} and \eqref{eq:f1}, and using the fact that the
maximum is lower bounded by the average, this implies that
\begin{align*}
\min_{u,v}\max_{\sigma}g^\sigma_{n}(u,v) &\geq
\min_{u,v}\max\left\{\frac{\lambda}{2}\bigl((u-u_1^*)^2+(v-u_1^*)^2\bigr),\,\frac{\lambda}{2}(u+v-u_2^*)^2\right\}\\
&\geq \min_{u,v}\frac{\lambda}{4}\Bigl((u-u_1^*)^2+(v-u_1^*)^2+(u+v-u_2^*)^2\Bigr)~.
\end{align*}
A straightforward calculation reveals that this expression is minimized at $u
= v=\frac{1}{3}(u_1^*+u_2^*)$, leading to a value of
$\frac{\lambda}{12}(2u_1^*-u_2^*)^2$. To summarize, we showed that
\[
\min_{u,v}\max_{\sigma}g^\sigma_{n}(u,v) \geq \frac{\lambda}{12}(2u_1^*-u_2^*)^2
\]
where
\[
u^*_1 = \arg\min_{u} \frac{n}{m}\ell(u,y)+\lambda u^2
\qquad\text{and}\qquad
u_2^* ~=~ \arg\min_{u}\frac{n}{m}\ell(u,y)+\frac{\lambda}{2} u^2~.
\]
This computation holds for any value of $y$, and therefore we have
\[
\min_{u,v}\max_{\sigma}g^\sigma_{n}(u,v) \geq \max_{y\in\sY}\frac{\lambda}{12}(2u_1^*-u_2^*)^2
\]
where $u_1^*,u_2^*$ are as defined above. Substituting this back into
\eqref{eq:minmax}, we get
\begin{equation}\label{eq:f2}
\frac{1}{60}\lambda d\left(\min_{\frac{m}{2d}\leq
{n} \leq \frac{2m}{d}}\max_{y\in\sY}\,(2u_1^*-u_2^*)^2\right)~.
\end{equation}
Finally, to write this in a simpler form, let $p=\frac{m}{nd}$. Then we can
equivalently write $u_1^*,u_2^*$ as
\begin{align*}
u^*_1 &= \argmin_{u} \ell(u,y)+\frac{m}{n}\lambda u^2 = \argmin_{u} \ell(u,y)+p\lambda d u^2\\
u_2^* &= \argmin_{u} \ell(u,y)+\frac{m\lambda}{2n} u^2 = \argmin_{u} \ell(u,y)+\frac{p\lambda d}{2} u^2~.
\end{align*}
Moreover, the constraint $\frac{m}{2d}\leq {n} \leq \frac{2m}{d}$ implies
that $p\in \left[\frac{1}{2},2\right]$, so we can lower bound \eqref{eq:f2}
by
\[
\frac{1}{60}\lambda d\left(\min_{p\in \left[\frac{1}{2},2\right]}\max_{y\in\sY}\,(2u_1^*-u_2^*)^2\right)
\]
as desired.
\hfill$\qed$

\subsection{Proof of \thmref{thm:lowrank}}
\label{subsec:theorem3}

Recall that we assume that $m$ is divisible by $2d$.
  Given $t=1,\ldots,m$, define
  \[
  i(t) = 1+\left\lfloor \frac{t-1}{m/2d}\right\rfloor
  \]
  to be the partition function of $\{1,\ldots,m\}$ into $2d$ equal-sized
  blocks:
  \begin{align*}
  &i(1)=i(2)=\cdots=i(m/2d)=1
  \\
  &i(m/2d+1)=\cdots=i(2m/2d)=2
  \end{align*}
  and so on, until $i(m)=2d$.

  Suppose we choose the target values $y_1,\ldots,y_m$ according to
  $y_i=z_{i(t)}$, where $\bz=(z_1,\ldots,z_{2d})$ is to be chosen later, and let
  $\bx_t=\bv_{i(t)}$ for $t=1,\dots,m$.
  Recall that  $k(\bv_i,\bv_j)=\Ind{\bv_i=\bv_j}$.
  It is easily seen that these instances induce a block-diagonal kernel matrix
  $K\in \sK_{2d,m}$, composed of $2d$ all-one blocks of equal size $m/(2d)$.
  Moreover, any low-rank matrix $K'$ used by the algorithm will also have a
  block-wise structure (with possibly different values for the entries), where
  $K'_{t,t'}=\inner{\phi(\bv_{i(t)}),\phi(\bv_{i(t')})}$.

  Given any such block-wise kernel matrix $K$, composed of $2d$ uniform blocks of size $m/2d$, let
  $G^{K}$ be the $2d\times 2d$ matrix defined as
  $G_{i(t),i(t')}=K_{t,t'}$. Note that since $K$ is symmetric,
  $G^{K}$ is symmetric as well.
  Finally, given some coefficient vector $\balpha$, define $\bbeta$ as
  \[
  \forall i=1,\ldots,d,~~\beta_i = \sum_{t\,:\,i(t)=i}\alpha_t~.
  \]
  With this notation, we can re-write the objective function and resulting solution using the following lemma.
  \begin{lemma}\label{lem:alphabeta}
      For any block matrix $K$, where $K_{t,t'}=K_{r,r'}$ if $i(t)=i(r)$ and $i(t')=i(r')$, and any coefficient vector $\balpha$ with corresponding
      $\bbeta$,
      we have
      \begin{align*}
      \frac{1}{m}\sum_{t=1}^{m}\left(\balpha^\top K\be_t-y_t\right)^2+\frac{\lambda}{2} \balpha^\top K \balpha
      &=
      \frac{1}{m}\left(\balpha^\top\left(K+\frac{m\lambda}{2}I\right)K\balpha-2\by^\top K\balpha+\norm{\by}^2\right)\\
      &=
      \frac{1}{2d}\left(\bbeta^\top \left(G^{K}+d\lambda I\right)G^{K}\bbeta-2\bz^\top G^{K} \bbeta+\norm{\bz}^2\right).
      \end{align*}
      Moreover, if $\balpha= \left(K+\frac{\lambda
      m}{2}I\right)^{-1}\by$,
      then $\bbeta= (G^{K}+d\lambda I)^{-1}\bz$.
  \end{lemma}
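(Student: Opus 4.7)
The plan is to prove the three claims in order, leveraging only the block structure of $K$ and straightforward algebra.

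For the first equality, I would expand the squared-loss term directly. Writing $\sum_{t=1}^m (\balpha^\top K\be_t - y_t)^2 = \sum_t (\balpha^\top K \be_t)^2 - 2\sum_t y_t \balpha^\top K\be_t + \sum_t y_t^2$ and using $\sum_t \be_t\be_t^\top = I$, this collapses to $\balpha^\top K^2 \balpha - 2\by^\top K \balpha + \norm{\by}^2$. Dividing by $m$, adding $\tfrac{\lambda}{2}\balpha^\top K\balpha$, and factoring out $K$ on the right of the quadratic term yields $\balpha^\top (K + \tfrac{m\lambda}{2}I) K \balpha$, establishing the first displayed equation.

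For the second equality, the key observation is that because $K$ is block-constant with value $G^K_{i,j}$ on the $(i,j)$-block, we have for every $t$
\[
(K\balpha)_t \;=\; \sum_{t'=1}^m K_{t,t'}\alpha_{t'} \;=\; \sum_{i'=1}^{2d} G^K_{i(t),i'}\!\sum_{t':i(t')=i'}\!\alpha_{t'} \;=\; (G^K \bbeta)_{i(t)},
\]
so $K\balpha$ is constant within each block. Similarly $\by$ is constant within each block by our choice $y_t = z_{i(t)}$. Writing $n = m/(2d)$ for the block size, each inner product over $\{1,\dots,m\}$ reduces to $n$ times an inner product over $\{1,\dots,2d\}$: explicitly, $\by^\top K\balpha = n\,\bz^\top G^K \bbeta$, $\norm{\by}^2 = n\norm{\bz}^2$, $\balpha^\top K^2\balpha = \norm{K\balpha}^2 = n\,\bbeta^\top (G^K)^2\bbeta$, and $\balpha^\top K\balpha = \sum_t \alpha_t (G^K\bbeta)_{i(t)} = \bbeta^\top G^K \bbeta$. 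Multiplying the first three by $1/m$ produces a prefactor $n/m = 1/(2d)$, and combining with $\tfrac{\lambda}{2}\bbeta^\top G^K \bbeta = \tfrac{1}{2d}\cdot d\lambda\,\bbeta^\top G^K\bbeta$ lets me factor everything as $\tfrac{1}{2d}\bigl(\bbeta^\top(G^K + d\lambda I)G^K\bbeta - 2\bz^\top G^K \bbeta + \norm{\bz}^2\bigr)$, matching the stated expression.

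For the minimizer correspondence, I would take the normal equations $(K + \tfrac{\lambda m}{2}I)\balpha = \by$ and sum them over $\{t : i(t) = i\}$ for each fixed $i \in \{1,\dots,2d\}$. The left side contributes $n\,(G^K\bbeta)_i + \tfrac{\lambda m}{2}\beta_i$ (using that $(K\balpha)_t = (G^K\bbeta)_i$ is constant on the block and that $\sum_{t:i(t)=i}\alpha_t = \beta_i$), while the right side is $n z_i$. Dividing by $n$ and using $m/n = 2d$ gives $(G^K\bbeta)_i + d\lambda\,\beta_i = z_i$ for every $i$, i.e., $(G^K + d\lambda I)\bbeta = \bz$, from which $\bbeta = (G^K + d\lambda I)^{-1}\bz$ follows (noting that $G^K + d\lambda I$ is positive definite for $\lambda > 0$, hence invertible).

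The only mild subtlety is the third step: one might be tempted to try to show $\balpha$ itself is constant within each block, but this is both unnecessary and not needed for the conclusion — summing the normal equations block-wise is exactly the right projection and isolates $\bbeta$ cleanly. Everything else is bookkeeping with the block-constant identity $(K\balpha)_t = (G^K\bbeta)_{i(t)}$, which is really the heart of the lemma.
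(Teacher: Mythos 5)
Your proposal is correct, and for the first two displayed equalities it is essentially the paper's argument: both reduce block-wise, the paper by rewriting the double sums over blocks directly, you by isolating the identity $(K\balpha)_t=(G^{K}\bbeta)_{i(t)}$ and then converting each inner product at the cost of a factor $n=m/(2d)$ — same bookkeeping, slightly different packaging. Where you genuinely diverge is the ``moreover'' part. The paper proves $\bbeta=(G^{K}+d\lambda I)^{-1}\bz$ by lifting the spectral decomposition $G^{K}=USU^\top$ to an explicit decomposition $K=VDV^\top$ with $V_{t,j}=\sqrt{2d/m}\,U_{i(t),j}$ and $D=\frac{m}{2d}S$, computing each entry $\alpha_t$ of $\bigl(K+\frac{\lambda m}{2}I\bigr)^{-1}\by$ in closed form, and summing over blocks; this is longer but makes the spectral relationship between $K$ and $G^{K}$ explicit (and incidentally shows $\alpha_t$ depends only on $i(t)$). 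You instead sum the normal equations $\bigl(K+\frac{\lambda m}{2}I\bigr)\balpha=\by$ over each block, using block-constancy of $K\balpha$ and $\by$, which collapses immediately to $(G^{K}+d\lambda I)\bbeta=\bz$ — shorter, more elementary, and it correctly avoids needing $\balpha$ to be block-constant. The only small caveat is your invertibility remark: positive definiteness of $G^{K}+d\lambda I$ rests on $G^{K}$ being positive semidefinite, which holds in the application (where $K'$ is a Gram matrix) but is not part of the lemma's hypotheses for a general block matrix; alternatively, invertibility of $G^{K}+d\lambda I$ already follows from the presupposed invertibility of $K+\frac{\lambda m}{2}I$, since any eigenvector $\bc$ of $G^{K}$ with eigenvalue $s$ lifts to the block-constant vector $\alpha_t=c_{i(t)}$, an eigenvector of $K$ with eigenvalue $\frac{m}{2d}s$, so $s+d\lambda\neq 0$. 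With that one-line fix noted, your route is a valid and arguably cleaner substitute for the paper's proof.
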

  The proof is a technical exercise, and appears separately in \subsecref{subsec:techlemma}.

  In our case, we chose the training instances so that $K$ is a block-diagonal matrix
  composed of $2d$ equal-sized all-ones block. Therefore, $G^K$ in our case is simply the
  $d\times d$ identity matrix. By \lemref{lem:alphabeta}, we can write the objective function as
  \[
  \frac{1}{m}\sum_{t=1}^{m}\left(\balpha^\top K\be_t-y_t\right)^2+\frac{\lambda}{2} \balpha^\top K\balpha
  ~=~\frac{1}{2d}\left((1+d\lambda)\norm{\bbeta}^2-2\bz^\top\bbeta+\norm{\bz}^2\right).
  \]
  This function is $\frac{1+d\lambda}{d}$-strongly convex in $\bbeta$, and is
  minimized at $\bbeta^*=\frac{1}{1+d\lambda}\bz$. Therefore, the error
  obtained by any other solution $\bbeta$ is at least
  \begin{equation}\label{eq:excbeta}
    \frac{1+d\lambda}{2d}\norml{\bbeta-\frac{1}{1+d\lambda}\bz}^2.
  \end{equation}

  According to \lemref{lem:alphabeta} and the definition of the algorithm, the $\bbeta$ corresponding to the coefficient vector $\balpha$
  returned by the learning algorithm (using a kernel matrix $K'$) satisfies $\bbeta ~=~ (G^{K'}+d\lambda
  I)^{-1}\bz$. Plugging this back into \eqref{eq:excbeta}, we get an error lower bound of
  \begin{equation}\label{eq:lowrank2}
    \frac{1+d\lambda}{2d}\norml{(G^{K'}+d\lambda I)^{-1}\bz-\frac{1}{1+d\lambda}\bz}^2
    ~=~\frac{1+d\lambda}{2d}\norml{\left((G^{K'}+d\lambda I)^{-1}-\frac{1}{1+d\lambda}I\right)\bz}^2.
  \end{equation}
  Let $USU^\top$ be the spectral decomposition of $G^{K'}$, where $U = \bigl[\bu_1,\dots,\bu_{2d}\bigr] \in \reals^{2d\times 2d}$ is an orthonormal matrix, and $S$ is a diagonal matrix
  with eigenvalues $s_1\leq s_2\leq\ldots s_{2d}$ on the diagonal. Moreover, since $K'$ is a matrix of rank at most $d$, it follows
  that $G^{K'}$ is also of rank at most $d$, hence $s_1=\cdots=s_d=0$. We can
  therefore re-write \eqref{eq:lowrank2} as
  \begin{align*}
  \frac{1+d\lambda}{2d}&\norml{\left(U(S+d\lambda I)^{-1}U^\top-\frac{1}{1+d\lambda}I\right)\bz}^2\\
  &=~  \frac{1+d\lambda}{2d}\norml{U\left((S+d\lambda I)^{-1}-\frac{1}{1+d\lambda}I\right)U^\top\bz}^2\\
  &=~ \frac{1+d\lambda}{2d}\norml{\left((S+d\lambda I)^{-1}-\frac{1}{1+d\lambda}I\right)U^\top\bz}^2\\
  &=~ \frac{1+d\lambda}{2d}\sum_{i=1}^{2d}\left(\left(\frac{1}{s_i+d\lambda}-\frac{1}{1+d\lambda}\right)\bu_i^\top \bz\right)^2\\
  &\geq~ \frac{1+d\lambda}{2d}\sum_{i=1}^{d}\left(\left(\frac{1}{s_i+d\lambda}-\frac{1}{1+d\lambda}\right)\bu_i^\top \bz\right)^2\\
  &=~ \frac{1+d\lambda}{2d}\sum_{i=1}^{d}\left(\left(\frac{1}{d\lambda}-\frac{1}{1+d\lambda}\right)\bu_i^\top \bz\right)^2\\
  &=~ \frac{1+d\lambda}{2d}\frac{1}{\bigl(d\lambda(1+d\lambda)\bigr)^2}\sum_{i=1}^{d}\left(\bu_i^\top \bz\right)^2\\
  &=~ \frac{1}{2(d\lambda)^2(1+d\lambda)d}\sum_{i=1}^{d}\left(\bu_i^\top \bz\right)^2~.\\
  \end{align*}
  We are now free to choose $\bz = (z_1,\dots,z_{2d})$, which induces some choice of the target
  values $y_1,\ldots,y_m$, to get the final bound. In
  particular, we argue that there exist some $\bz\in \{-1,+1\}^{2d}$ such
  that
  \[
  \sum_{i=1}^{d}\left(\bu_i^\top \bz\right)^2 \geq d
  \]
  from which the result follows. To show this, we use the probabilistic
  method: Suppose that $\bz$ is chosen uniformly at random from
  $\{-1,+1\}^{2d}$. Then
  \begin{align*}
  \E\left[\sum_{i=1}^{d}\left(\bu_i^\top\bz\right)^2\right] &=~
  \E\left[\sum_{i=1}^{d}\bu_i^\top \bz \bz^\top \bu_i\right]
  ~=~ \sum_{i=1}^{d}\bu_i^\top \E[\bz \bz^\top]\bu_i\\
  &=~ \sum_{i=1}^{d}\bu_i^\top I \bu_i ~=~ \sum_{i=1}^{d}\norm{\bu_i}^2 = d~.
  \end{align*}
  This means that there must exist some $\bz\in \{-1,+1\}^{2d}$ such that
  $\sum_{i=1}^{d}\left(\bu_i^\top\bz\right)^2\geq d$ as required.
\hfill$\qed$

\subsection{Proof of \lemref{lem:alphabeta}}\label{subsec:techlemma}
    The fact that
    \[
    \frac{1}{m}\sum_{t=1}^{m}\left(\balpha^\top K\be_t-y_t\right)^2+\frac{\lambda}{2} \balpha^\top K \balpha
    ~=~
    \frac{1}{m}\left(\balpha^\top\left(K+\frac{m\lambda}{2}I\right)K\balpha-2\by^\top K\balpha+\norm{\by}^2\right)
    \]
    is a straightforward exercise. The second part of the equation can be
    shown as follows:
    \begin{align}
      &\frac{1}{m}\sum_{t=1}^{m}\left(\balpha^\top K\be_t-y_t\right)^2+\frac{\lambda}{2} \balpha^\top K \balpha\notag\\
      &=\frac{1}{m}\sum_{j=1}^{2d}\sum_{t\,:\,i(t)=j}\left(\sum_{j'=1}^{2d}\>\sum_{t':\,i(t')=j'}\alpha_{t'}K_{t',t}-y_t\right)^2
      +\frac{\lambda}{2}\sum_{j,j'=1}^{2d}\>\sum_{t\,:\,i(t)=j}\>\sum_{t':\,i(t')=j'}\alpha_t K_{t,t'}\alpha_{t'}\notag\\
      &=\frac{1}{m}\sum_{j=1}^{2d}\sum_{t\,:\,i(t)=j}\left(\sum_{j'=1}^{2d}\beta_j G^{K}_{j',j}-z_j\right)^2+\frac{\lambda}{2}\sum_{j,j'=1}^{2d}\beta_j G^{K}_{j,j'}\beta_{j'}\notag\\
      &=\frac{1}{m}\sum_{j=1}^{2d}\frac{m}{2d}\left(\bbeta^\top G^{K} \be_j-z_j\right)^2+\frac{\lambda}{2}\bbeta^\top G^{K}\bbeta\notag\\
      &=\frac{1}{2d}\norml{G^{K}\bbeta-\bz}^2+\frac{\lambda}{2}\bbeta^\top G^{K}\bbeta\notag\\
      &=\frac{1}{2d}\left(\bbeta^\top G^{K}G^{K}\bbeta-2\bz^\top G^{K}\bbeta+\norm{\bz}^2\right)
      +\frac{\lambda}{2}\bbeta^\top G^{K}\bbeta\notag\\
      &= \frac{1}{2d}\left(\bbeta^\top \left(G^{K}+d\lambda I\right)G^{K}\bbeta-2\bz^\top G^{K} \bbeta+\norm{\bz}^2\right)~.\label{eq:lowrank1}
  \end{align}
  As to the second claim in the lemma, let $G^{K}=USU^\top$ be the spectral decomposition of $G^{K}$ (where $U,S\in \reals^{2d\times 2d}$). Then we argue that $VDV^\top$ is a valid spectral decomposition of $K$, where $V\in \reals^{m\times 2d},D\in
  \reals^{2d\times 2d}$ are defined as
  \[
  \forall t=1,\ldots,m~~\forall j=1,\ldots,2d~~~~V_{t,j}= \sqrt{\frac{2d}{m}}U_{i(t),j}~,~ D_{j,j}=\frac{m}{2d}S_{j,j}~.
  \]
  This is because $V$'s columns are orthonormal (this can be easily checked
  based on $U$'s columns being orthonormal), and moreover, for any $t,t'\in
  \{1,\ldots,m\}$ such that $i(t)=j,i(t')=j'$, we have
  \begin{align*}
    K_{t,t'}&=G_{j,j'}=\sum_{p=1}^{2d}U_{j,p}S_{p,p}U_{j',p}\\
    &= \sum_{p=1}^{2d}\left(\sqrt{\frac{m}{2d}}~V_{t,p}\right)\left(\frac{2d}{m}D_{p,p}\right)\left(\sqrt{\frac{m}{2d}}~V_{t',p}\right)\\
    &= \sum_{p=1}^{2d} V_{t,p}D_{p,p} V_{t',p}
  \end{align*}
  so $K=VDV^\top$.

  Therefore, we get than any entry $t$ of $\balpha=\left(K+\frac{\lambda
      m}{2}I\right)^{-1}\by$ can be written as follows:
  \begin{align*}
     \alpha_t&=~\left(\left(K+\frac{\lambda m}{2}I\right)^{-1}\by\right)_t
      ~=~\left(\left(VDV^\top+\frac{\lambda m}{2}VIV^\top\right)^{-1} \by\right)_t\\
      &=\left(V\left(D+\frac{\lambda m}{2}I\right)^{-1}V^\top \by\right)_t
      ~=~ \sum_{p=1}^{2d}V_{t,p}\frac{1}{D_{p,p}+\frac{\lambda m}{2}}\left(\sum_{q=1}^{m}V_{q,p}\,y_q\right)\\
      &= \sum_{p=1}^{2d}\left(\sqrt{\frac{2d}{m}}U_{i(t),p}\right)\frac{1}{\frac{m}{2d}S_{p,p}
      +\frac{\lambda m}{2}}\left(\sum_{q=1}^{m}\left(\sqrt{\frac{2d}{m}}U_{i(q),p}\,z_{i(q)}\right)\right)\\
      &= \sum_{p=1}^{2d}\left(\sqrt{\frac{2d}{m}}U_{i(t),p}\right)\frac{1}{\frac{m}{2d}S_{p,p}
      +\frac{\lambda m}{2}}\left(\sum_{j=1}^{2d}\frac{m}{2d}\left(\sqrt{\frac{2d}{m}}U_{j,p}z_{j}\right)\right)\\
      &= \sum_{p=1}^{2d}U_{i(t),p}\frac{1}{\frac{m}{2d}S_{p,p}
      +\frac{\lambda m}{2}}\left(\sum_{j=1}^{2d}U_{j,p}z_{j}\right)
      ~=~ \frac{2d}{m}\sum_{p,j=1}^{2d}U_{i(t),p}\frac{1}{S_{p,p}
      +\lambda d}U_{j,p}z_{j}\\
      &=\frac{2d}{m}\bigl(U (S+\lambda d I)^{-1}U^\top\bz\bigr)_{i(t)}
      ~=~\frac{2d}{m}\left(\left(G^{K}+\lambda d I\right)^{-1}\bz\right)_{i(t)}
  \end{align*}
  from which it follows that for all $i=1,\ldots,2d$,
  \[
  \beta_{i}=\sum_{t\,:\,i(t)=i}\alpha_t =
  \frac{m}{2d}\times\frac{2d}{m}\left(\left(G^{K}+\lambda d I\right)^{-1}\bz\right)_{i} = \left(\left(G^{K}+\lambda d I\right)^{-1}\bz\right)_i~.
  \]
  Hence $\bbeta=\left(G^{K}+\lambda d I\right)^{-1}\bz$ as required.

\end{document}